\newtheorem{definition}{Definition}
\newtheorem{remark}{Remark}
\newtheorem{theorem}{Theorem}
\newtheorem{corollary}[theorem]{Corollary}
\newtheorem{lemma}[theorem]{Lemma}
\title{Is Bellman Equation Enough for Learning Control?}
\author{Haoxiang You\textsuperscript{1}, Lekan Molu\textsuperscript{2}, Ian Abraham\textsuperscript{1}}
\keywords{Bellman equation, Value-based approach, Architecture for RL} 
\begin{document}


\maketitle  

\begin{abstract}
The Bellman equation and its continuous-time counterpart, the Hamilton-Jacobi-Bellman (HJB) equation, serve as necessary conditions for optimality in reinforcement learning and optimal control. 
While the value function is known to be the unique solution to the Bellman equation in tabular settings, we demonstrate that this uniqueness \textbf{fails to hold} in continuous state spaces. 
Specifically, for linear dynamical systems, we prove the Bellman equation admits at least $\binom{2n}{n}$ solutions, where $n$ is the state dimension. 
Crucially, \textbf{only one} of these solutions yields both an optimal policy and a stable closed-loop system. 
We then demonstrate a common failure mode in value-based methods: convergence to unstable solutions due to the exponential imbalance between admissible and inadmissible solutions. 
Finally, we introduce a positive-definite neural architecture that guarantees convergence to the stable solution by construction to address this issue.
\end{abstract}
\section{Introduction}
\label{sec: Intro}
Reinforcement learning (RL) methods are broadly categorized into policy-based approaches~\citep{williams1992simple, schulman15, schulman2017proximal}, which directly optimize policy parameters using gradient estimates, and value-based approaches~\citep{watkins1992q, mnih2015human, lillicrap2015continuous}, which indirectly learn policies by solving optimality conditions encoded in Bellman equations.
While policy-based methods excel in high-dimensional dynamical system, value-based approaches benefit from sample efficiency through offline learning.
In continuous-time domains, the Hamilton–Jacobi–Bellman (HJB) equation serves as the counterpart to the discrete-time Bellman equation. HJB-based RL enables control at arbitrarily high frequencies~\citep{doya2000reinforcement} and naturally incorporates physical priors for robust policy synthesis~\citep{lutter2021robust}. 
Despite these advantages, practical applications remain largely limited to low-dimensional systems~\citep{nakamura2021adaptive,lutter2023robust}, due to fundamental challenges in solving high-dimensional HJB equations.

\emph{
    Two critical issues hinder the extension of HJB methods to complex systems:
    }
    
\textbf{Solution Existence:} 
The value function (optimal performance index) may lack smoothness or even continuity~\citep{bardi1997optimal, clarke2008nonsmooth}. Common remedies include viscosity solutions~\citep{crandall1983viscosity, shilova2024learning} that generalize differentiability requirements, or stochastic regularization through noise injection~\citep{fleming2006controlled, tassa2007least}.
    
\textbf{Solution Uniqueness:}
Since the Bellman equation is only a necessary condition for optimality, it may admit general solutions beyond the value function.
Early work~\citep{bellman1957dynamic, blackwell1965discounted, sutton1988learning} has shown, value function is indeed the unique solution to Bellman equation for the finite state and action space.
Yet, the uniqueness of general solutions in continuous state space is overlooked in the reinforcement learning community. We show that the Bellman equation admits multiple solutions in continuous state spaces, making the performance of value-based approaches highly dependent on model initialization. This non-uniqueness partially explains why value-based approaches are sensitive to hyperparameter choices and initialization~\citep{ceron2024consistency}.

In this paper, we formally analyze solution spaces for both discrete-time Bellman and continuous-time HJB equations in continuous state and action space. 
For linear dynamical systems, we prove the Bellman equation admits at least $\binom{2n}{n}$ solutions -- a number growing exponentially($\sim 4^n/\sqrt{\pi n}$ asymptotically) with state dimension $n$. 
Crucially, \textbf{only one} solution leads to a stable closed-loop system. 
This exponential disparity between admissible solutions and the unique stabilizing optimum fundamentally obstructs value-based learning approaches from synthesizing effective controllers. 
We characterize this inherent challenge in continuous spaces -- the need to identify the optimal solution among exponentially many alternatives -- as \textit{the curse of dimensionality in solution space}, complementing the well-known \textit{curse of dimensionality in computation} from tabular setting.

Our key contributions are:
\begin{enumerate}[itemsep=-2pt]
    \item A constructive method to derive general solutions of Bellman/HJB equations for linear systems, with theoretical characterization of their relationship to closed-loop stability via spectral analysis.
    \item Identification of a canonical failure mode in practice: value-based methods converge to unstable general solutions.
    \item A positive-definite neural architecture that provably constrains learning to the stable solution subspace, demonstrating its effectiveness beyond linear dynamics.
\end{enumerate}

The rest of the paper is organized as following: Section~\ref{sec: background} covers the background on problem setting, Bellman equation, and learning methods for solving them. Section~\ref{sec: general solutions} focus on the general solutions of Bellman equation and how they affect control. Section~\ref{sec: canonical failure} present a canonical failure mode in value-based learning approach. Section~\ref{sec:potential solutions} discuss techniques to handle this failure. Finally, Section~\ref{sec: related work} highlight the difference between this paper and existing work.

\section{Background}
\label{sec: background}
This section presents the optimal control framework, introduces both continuous-time and discrete-time Bellman equations, and reviews contemporary learning methods for their solution.
\subsection{Problem Setting}
In this paper, we consider both continuous-time system in form of  
\begin{equation}
    \dot{\mathbf{x}}(t) = f(\mathbf{x}(t), \ \text{and} \,\, \mathbf{x}_{k+1} = f(\mathbf{x_k}, \mathbf{u}_k) \label{eq: continuous-dyn}
\end{equation}
in discrete-time.
Here, $\mathbf{x} \in \mathcal{X} \subseteq \mathbb{R}^n$ is the state and $\mathbf{u} \in \mathcal{U} \subseteq \mathbb{R}^m$ is the control input,
We use unified notation where context determines the temporal domain.
A control policy $\pi: \mathcal{X} \to \mathcal{U}$ maps states to actions:
\begin{equation}
    \mathbf{u}(t)  = \pi(\mathbf{x}(t)) \label{eq: policy} \ \text{or} \ \textbf{u}_k = \pi(\mathbf{x}_k),
\end{equation} 
with associated cumulative costs:
\begin{equation}
    \mathcal{J}^\pi(\mathbf{x}(t)) = \int_t^\infty e^{-\frac{s-t}{\tau}} l(\mathbf{x}(s), \mathbf{u}(s)) ds \,\, \text{and} \,\,\mathcal{J}^{\pi}(\mathbf{x_k}) = \sum_{i=k}^\infty \gamma^{i-k} l(\mathbf{x}_i, \mathbf{u}_i) \label{eq: continuous cumulative cost}
\end{equation}
Here $l: \mathcal{X} \times \mathcal{U} \rightarrow \mathbb{R}_+$ is the running cost, $e^{-\frac{s-t}{\tau}}$ and $\gamma^{i-k}$ are discount factors for future cost. States $\mathbf{x}(\cdot)$ and controls $\mathbf{u}(\cdot)$ evolve under dynamics~\eqref{eq: continuous-dyn} and policy~\eqref{eq: policy}.
The optimal control seeks a policy $\pi^*$ minimizing $\mathcal{J}^\pi$ for all initial states. The time-invariant formulation ensures stationarity of $\pi^*$ in infinite-horizon settings.

\subsection{Bellman Equation}

Control solutions that optimize~\eqref{eq: continuous cumulative cost} are established through defining the value function. 
The value function represents the minimal cumulative cost:
\begin{equation}
    \mathcal{V}(\mathbf{x}(t)) = \mathcal{J}^{\pi^*}(\mathbf{x}(t)) = \underset{\mathbf{u}[t, \infty)}{\text{min}}\ \Bigg[\int_t^\infty e^{-\frac{s-t}{\tau}} l(\mathbf{x}(s), \mathbf{u}(s)) ds\Bigg], \label{eq: continuous value function}
\end{equation}
or in discrete-time
\begin{equation}
    \mathcal{V}(\mathbf{x}_k) = \mathcal{J}^{\pi^*}(\mathbf{x}_k) 
    = \underset{\mathbf{u}_k, \mathbf{u}_{k+1},\dots}{\text{min}}\Bigg[\sum_{i=k}^\infty \gamma^{i-k} l(\mathbf{x}_i, \mathbf{u}_i)\Bigg],
\end{equation}
where $\mathbf{u}[t,\infty)$ denotes continuous control trajectories and $\{\mathbf{u}_k, \dots\}$ discrete action sequences.
According to the principle of optimality~\citep{bellman1957dynamic}, the value function must satisfying a
necessary condition known as Bellman equation:
\begin{equation}
    \mathcal{V}(\mathbf{x}_k) =  \underset{\mathbf{u}_k}{\min} \Big[l(\mathbf{x}_k, \mathbf{u}_k) + \gamma \mathcal{V}(\mathbf{x}_{k+1})\Big], \label{eq: Bellman equation}
\end{equation}
or Hamilton-Jacobi-Bellman(HJB) equation for the continuous-time case:
\begin{equation}
    \frac{1}{\tau} \mathcal{V}(\mathbf{x}(t)) = \underset{\mathbf{u}(t) \in \mathcal{U}}{\min} \ \Bigg[ l(\mathbf{x}(t), \mathbf{u}(t)) + \frac{\partial \mathcal{V}(\mathbf{x}(t))}{\partial \mathbf{x}}^\top f(\mathbf{x}(t), \mathbf{u}(t)) \Bigg]. \label{eq: HJB}
\end{equation}
The optimal policy is given by solving right-hand side of the equation, that is 
\begin{equation}
    \mathbf{u}_k = \pi^*(\mathbf{x}_k) = \underset{\mathbf{u} \in \mathcal{U}}{\text{argmin}} \ \Big[ l(\mathbf{x}_k, \mathbf{u}) + \gamma\mathcal{V}(\mathbf{x}_{k+1}) \Big] \label{eq: time-discrete optimal policy given value function}
\end{equation}
for discrete-time system, and 
\begin{equation}
    \mathbf{u}^*(t) = \pi^*(\mathbf{x}(t)) =  \underset{\mathbf{u} \in \mathcal{U}}{\text{argmin}} \ \Bigg[ l(\mathbf{x}(t), \mathbf{u}) + \frac{\partial \mathcal{V}(\mathbf{x}(t))}{\partial \mathbf{x}}^\top f(\mathbf{x}(t), \mathbf{u}) \Bigg] \label{eq: optimal policy given value function}
\end{equation}
for continous-time system.

\subsection{Solving Bellman Equation}

At the heart of value-based approaches~\citep{doya2000reinforcement,sutton2018reinforcement} for solving Bellman equation lies in minimizing the residuals for Bellman Equation, known as temporal difference(TD) error:
\begin{equation}
    \delta(\mathbf{x}_k) \triangleq l(\mathbf{x}_k, \mathbf{u}_k^*) + \gamma\hat{\mathcal{V}}_\theta(f(\mathbf{x}_k,\mathbf{u}_k^*)) - \hat{\mathcal{V}}_\theta(\mathbf{x}_k) \label{eq: TD error},
\end{equation}
or
\begin{equation}
    \delta (\mathbf{x}(t)) \triangleq  l(\mathbf{x}(t), \mathbf{u^*}(t)) + \frac{\partial \mathcal{\hat{V}}_\theta(\mathbf{x}(t))}{\partial \mathbf{x}}^\top f(\mathbf{x}(t), \mathbf{u^*}(t)) - \frac{1}{\tau} \mathcal{\hat{V}_\theta} (\mathbf{x}(t) \label{eq: continuous-time TD},
\end{equation}
where, $\mathcal{\hat{V}_\theta}: \mathcal{X} \times \Theta \rightarrow \mathbb{R}$ is the candidate solution to Bellman equation. 
In the tabular setting, minimization is applied to the entire state space via value iteration (see Appendix~\ref{sec: tabular}). In contrast, in continuous state spaces, we can only update a subset of states. These subsets are collected either through Monte Carlo rollouts~\citep{mnih2015human, lillicrap2015continuous} or via simple grid sampling~\citep{shilova2024learning}.
The notation $\mathbf{u}^*$ represents the current estimate of the optimal action that minimizes the right-hand side of the Bellman equation. 
In practice, $\mathbf{u}^*$ is either performed by training another policy network~\citep{lillicrap2015continuous} or solved analytically~\citep{gu2016continuous,lutter2021value}.

\section{On the General Solutions of Bellman Equation}\label{sec: general solutions}

In this section, we study the general solutions of the Bellman equation and their impact on learning and control.
We begin our analysis with a continuous-time linear dynamical problem and then extend the study to the discrete-time case and nonlinear problems.

\subsection{Solutions to Linear Continuous-Time Control Problem}\label{sec: LQR}
Here we study the general solutions to continuous-time control problem (specifically the Linear-Quadratic-Regulator (LQR) problem): 
\begin{equation}
\begin{array}{cc}
      \min \quad & \int_t^\infty \mathbf{x}(s)^\top Q \mathbf{x}(s) + \mathbf{u}(s)^\top R \mathbf{u}(s) ds  \\
     \text{subject to} \quad & \dot{\mathbf{x}}(t) = A \mathbf{x}(t) + B \mathbf{u}(t)
\end{array} \label{eq: continuous-LQR},
\end{equation}
where $A \in \mathbb{R}^{n \times n}$, $B \in \mathbb{R}^{n \times m}$, $Q = Q^\top \succeq 0 \in \mathbb{R}^{n \times n}$, and $R = R^\top \succ 0 \in \mathbb{R}^{m \times m}$ are constant matrices. 
Assuming a quadratic value function
\begin{equation}
    \mathcal{V}(\mathbf{x}) = \mathbf{x}(t)^\top P \mathbf{x}(t), \label{eq: quadratic value function}
\end{equation}
and substituting the value function~\eqref{eq: quadratic value function} and LQR problem setting~\eqref{eq: continuous-LQR} into the HJB equation~\eqref{eq: HJB}, we have 
\begin{equation}
    0 =\underset{\mathbf{u}(t)}{\text{min}} \ \big[\mathbf{u}(t)^\top R \mathbf{u}(t) + 2\mathbf{x}(t)^\top P B\mathbf{u}(t) + \mathbf{x}(t)^\top Q \mathbf{x}(t) + \mathbf{x}(t)^\top P A \mathbf{x}(t) + \mathbf{x}(t)^\top A P \mathbf{x}(t) \big] \label{eq: LQR HJB}. 
\end{equation}
Notably, the minimizing problem on the right-hand side is quadratic in $\mathbf{u}(t)$. Thereforce, we can obtain the analytical optimal control
\begin{equation}
    \mathbf{u}^*(t) = -R^{-1} B^\top P \mathbf{x}(t) \label{eq: opt u for continuous-LQR},
\end{equation}
which yields
\begin{equation}
    \mathbf{x}^\top (t) (A^\top P + P A - PBR^{-1}B^\top P + Q) \mathbf{x} (t)= 0. \label{eq: LQR HJB with optimal u}
\end{equation}
The HJB equation must be satisfied globally; therefore, the following continuous algebraic Riccati equation (ARE) must hold for the matrix $P$:
\begin{equation}
    A^\top P + P A - PBR^{-1}B^\top P + Q = 0 \label{eq: continuous ARE}.
\end{equation}
Conversely, any matrix $P$ that satisfies the ARE~\eqref{eq: continuous ARE} is a solution to HJB equation of the LQR problem. 

\paragraph{Effect of Discount Factor}
In the formulation of LQR problem~\eqref{eq: continuous ARE}, we omit the discount factor $e^{-\frac{s}{\tau}}$. This omission is made only for simplicity and by no means affect the generality of the results we will show shortly. 
Particularly considering the LQR problem with discounted running cost
\begin{equation}
\begin{array}{cl}
      \min \quad & \int_t^\infty e^{-\frac{s}{\tau}}[\mathbf{x}(s)^\top Q \mathbf{x}(s) + \mathbf{u}(s)^\top R \mathbf{u}(s)] ds  \\
     \text{subject to} \quad & \dot{\mathbf{x}}(s) = A \mathbf{x}(s) + B \mathbf{u}(s), \quad \mathbf{x}(t) \ \text{given}
\end{array} \label{eq: discounted LQR}
\end{equation}
and its undiscounted counterpart with modified dynamics
\begin{equation}
\begin{array}{cl}
      \min \quad & \int_t^\infty \mathbf{x}(s)^\top Q \mathbf{x}(s) + \mathbf{u}(s)^\top R \mathbf{u}(s) ds  \\
     \text{subject to} \quad & \dot{\mathbf{x}}(s) = (A-\frac{1}{2\tau} I) \mathbf{x}(s) + B \mathbf{u}(s), \quad \mathbf{x}(t) \ \text{given}
\end{array} \label{eq: modified LQR},
\end{equation}
we have the following result.
\begin{theorem}\label{thm: discounted don't matter}
The value function $\mathcal{V}(\mathbf{x}) = \mathbf{x}^\top P \mathbf{x}$ satisfies the HJB equation for the discounted LQR problem~\eqref{eq: discounted LQR} if and only if it satisfies the HJB equation for the undiscounted LQR problem with modified dynamics~\eqref{eq: modified LQR}.
\end{theorem}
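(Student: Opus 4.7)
The plan is to show that, after substituting the quadratic ansatz and analytically minimizing over $\mathbf{u}$, both HJB equations collapse to the \emph{same} algebraic Riccati equation in $P$. Since the ansatz $\mathcal{V}(\mathbf{x})=\mathbf{x}^\top P\mathbf{x}$ satisfies an HJB equation (globally in $\mathbf{x}$) if and only if the corresponding matrix equation holds, matching the two Riccati equations establishes both directions of the iff.

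First, I would derive the HJB for the discounted problem~\eqref{eq: discounted LQR}. Because the discount is $e^{-s/\tau}$ rather than $e^{-(s-t)/\tau}$, I would either factor out $e^{-t/\tau}$ from the value function (writing $J(\mathbf{x},t)=e^{-t/\tau}\mathcal{V}(\mathbf{x})$ under the assumed stationary quadratic form) or invoke the fact that the two discount conventions differ by a state-independent factor and thus yield the same stationary HJB. Either way, the HJB becomes
\begin{equation*}
\tfrac{1}{\tau}\mathcal{V}(\mathbf{x}) \;=\; \underset{\mathbf{u}}{\min}\ \bigl[\mathbf{x}^\top Q\mathbf{x}+\mathbf{u}^\top R\mathbf{u}+\nabla\mathcal{V}^\top(A\mathbf{x}+B\mathbf{u})\bigr],
\end{equation*}
which is the form of~\eqref{eq: HJB}. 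Plugging in $\mathcal{V}=\mathbf{x}^\top P\mathbf{x}$, solving the inner quadratic in $\mathbf{u}$ to get $\mathbf{u}^*=-R^{-1}B^\top P\mathbf{x}$ (as in~\eqref{eq: opt u for continuous-LQR}), and matching the resulting quadratic form in $\mathbf{x}$ globally yields
\begin{equation*}
A^\top P + PA - \tfrac{1}{\tau}P - PBR^{-1}B^\top P + Q \;=\; 0.
\end{equation*}

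Second, I would apply the argument that led to the standard ARE~\eqref{eq: continuous ARE} directly to problem~\eqref{eq: modified LQR} with $\tilde{A}:=A-\tfrac{1}{2\tau}I$. Since the modified problem is undiscounted, the HJB yields
\begin{equation*}
\tilde{A}^\top P + P\tilde{A} - PBR^{-1}B^\top P + Q \;=\; 0.
\end{equation*}
The key calculation is that $\tfrac{1}{2\tau}I$ is symmetric, so the cross terms combine as
\begin{equation*}
\tilde{A}^\top P + P\tilde{A} \;=\; A^\top P + PA - \tfrac{1}{\tau}P,
\end{equation*}
which reproduces the discounted Riccati equation from the previous step exactly. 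Thus a matrix $P$ solves one ARE iff it solves the other, proving the theorem.

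The one subtle step is the first: justifying that the HJB associated with~\eqref{eq: discounted LQR} (whose discount kernel is anchored at $s=0$) reduces to the stationary form~\eqref{eq: HJB} used throughout the paper. This is not a hard obstacle but requires an explicit remark that the stationary quadratic ansatz is compatible with factoring out $e^{-t/\tau}$; once this is in place, the remainder of the proof is a direct symmetric expansion and term-by-term comparison of the two Riccati equations.
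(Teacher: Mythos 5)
Your proposal is correct and follows essentially the same route as the paper: substitute the quadratic ansatz and the minimizer $\mathbf{u}^*=-R^{-1}B^\top P\mathbf{x}$ into the discounted HJB, obtain $A^\top P+PA-\tfrac{1}{\tau}P-PBR^{-1}B^\top P+Q=0$, and observe that splitting $-\tfrac{1}{\tau}P$ symmetrically absorbs it into $(A-\tfrac{1}{2\tau}I)^\top P+P(A-\tfrac{1}{2\tau}I)$, which is exactly the undiscounted ARE for the modified dynamics. Your extra remark about reconciling the $e^{-s/\tau}$ discount anchoring with the stationary HJB form is a reasonable point of care that the paper glosses over, but it does not change the argument.
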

\begin{proof}
Substituting the discounted LQR problem~\eqref{eq: discounted LQR}, $\mathcal{V}(\mathbf{x}) = \mathbf{x}^\top P \mathbf{x}$, and policy $\mathbf{u}^*(t) = -R^{-1}B^\top P \mathbf{x}(t)$ into HJB equation~\eqref{eq: HJB} yields 
\begin{equation}
    \mathbf{x}^\top(t) (A^\top P + PA - PBR^{-1}B^\top P +Q) \mathbf{x}^\top(t) = \frac{1}{\tau}\mathbf{x}^\top(t) P \mathbf{x}(t) \label{eq: HJB undiscounted LQR}.
\end{equation}
Rearranging the equation~\eqref{eq: HJB undiscounted LQR}, we have
\begin{equation}
    \mathbf{x}^\top(t) [(A - \frac{1}{2\tau} I)^\top P + P(A -\frac{1}{2\tau} I) - PBR^{-1}B^\top P +Q] \mathbf{x}^\top(t) = 0, \label{eq: HJB undiscounted LQR rearrange}
\end{equation}
which matches equation~\eqref{eq: LQR HJB with optimal u} with modified dynamics $\dot{\mathbf{x}}(s) = (A-\frac{1}{2\tau} I) \mathbf{x}(s) + B \mathbf{u}(s)$.
\end{proof}
The stability of opened-loop system $\dot{\mathbf{x}}(s) = A\mathbf{x}(s)$ dependent solely on the eigenvalues of $A$. 
Specially, the system is more stable as the eigenvalues of $A$ become more negative.  
Therefore, an interesting interpretation of Theorem~\ref{thm: discounted don't matter} is that, in terms of the value function, incorporating a discount factor into the running cost is equivalent to controlling a more stable system. 

\paragraph{Solutions to Algebraic Riccati Equation}
Here, we introduce a subspace invariant method~\citep{lancaster1995algebraic} to obtain a general solution for the ARE~\eqref{eq: continuous ARE}.
We begin by construct a $2n\times2n$ a Hamiltonian matrix:
\begin{equation}
    H = \begin{bmatrix}
    A &-BR^{-1}B^\top\\
    -Q &-A^\top
    \end{bmatrix} \label{eq: hamiltonian}.
\end{equation}

\begin{definition}
We say $\mathcal{M}$ is an invariant subspace of $H$ if $H \mathbf{x} \in \mathcal{M}$ for all $\mathbf{x} \in \mathcal{M}$.
\end{definition}
\begin{theorem} \label{thm: solution to ARE}
Suppose $[P_1^\top,P_2^\top]^\top$forms a basis of invariant subspace of $H$~\eqref{eq: hamiltonian}, where $P_1, P_2$ are $n \times n$ matrices and $P_1$ invertible. Then 
\begin{equation}
    P=P_2P_1^{-1} \label{eq: sol P}
\end{equation}
is a solution to ARE~\eqref{eq: continuous ARE}.
\end{theorem}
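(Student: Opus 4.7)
The plan is to exploit the definition of an invariant subspace to produce an algebraic identity between the block components $P_1$ and $P_2$, and then show that multiplying through by $P_1^{-1}$ reproduces the ARE with $P = P_2 P_1^{-1}$.

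First, I would unpack the hypothesis. Since the columns of $\begin{bmatrix} P_1 \\ P_2 \end{bmatrix}$ form a basis of an $n$-dimensional invariant subspace of $H$, there must exist some $n \times n$ matrix $\Lambda$ (the restriction of $H$ to that subspace, expressed in this basis) such that
\begin{equation*}
H \begin{bmatrix} P_1 \\ P_2 \end{bmatrix} = \begin{bmatrix} P_1 \\ P_2 \end{bmatrix} \Lambda.
\end{equation*}
Substituting the block form of $H$ from equation~\eqref{eq: hamiltonian} and reading off the two $n \times n$ block rows gives the coupled identities $A P_1 - B R^{-1} B^\top P_2 = P_1 \Lambda$ and $-Q P_1 - A^\top P_2 = P_2 \Lambda$.

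Next, I would eliminate $\Lambda$ using the assumed invertibility of $P_1$. Solving the first identity for $\Lambda = P_1^{-1}(A P_1 - B R^{-1} B^\top P_2)$ and inserting into the second yields
\begin{equation*}
-Q P_1 - A^\top P_2 = P_2 P_1^{-1} A P_1 - P_2 P_1^{-1} B R^{-1} B^\top P_2.
\end{equation*}
Multiplying both sides on the right by $P_1^{-1}$ and introducing $P := P_2 P_1^{-1}$, the equation collapses to
\begin{equation*}
-Q - A^\top P = P A - P B R^{-1} B^\top P,
\end{equation*}
which, after rearrangement, is precisely the ARE~\eqref{eq: continuous ARE}.

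I do not expect a real obstacle in this argument; it is a straightforward consequence of block matrix arithmetic, and the only subtle point is the well-definedness of $\Lambda$ as a matrix representation of $H|_{\mathcal{M}}$ in the chosen basis, which follows immediately from the invariance hypothesis. The mild care is in tracking left/right multiplications so that $P_1^{-1}$ is applied on the correct side to produce the symmetric-looking ARE, and in noting that invertibility of $P_1$ is used exactly once, to pass from $\Lambda$ to $P$.
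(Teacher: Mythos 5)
Your proposal is correct and follows essentially the same route as the paper: both introduce the matrix representation of $H$ restricted to the invariant subspace, read off the two block-row identities, eliminate that matrix, and use the invertibility of $P_1$ to pass to $P = P_2 P_1^{-1}$. The only cosmetic difference is that the paper first normalizes the basis to $[I,\ P^\top]^\top$ before extracting the two equations, whereas you extract them first and right-multiply by $P_1^{-1}$ afterwards; the algebra is identical.
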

\begin{proof}
By definition of invariant subspace, there exist a $n \times n$ matrix $T$ such that
\begin{equation}
    H \begin{bmatrix}P_1\\P_2\end{bmatrix} = \begin{bmatrix}P_1\\P_2\end{bmatrix} T.
\end{equation}

By $P_1$ invertible, we have
\begin{equation}
    \begin{bmatrix}
    A &-BR^{-1}B^\top\\
    -Q &-A^\top
    \end{bmatrix} \begin{bmatrix}P_1\\P_2\end{bmatrix} P_1^{-1} = \begin{bmatrix}P_1\\P_2\end{bmatrix} P_1^{-1}P_1TP_1^{-1} =  \begin{bmatrix}P_1\\P_2\end{bmatrix} P_1^{-1} \hat{T}.
\end{equation}
so that 
\begin{equation}
    \begin{bmatrix}
    A &-BR^{-1}B^\top\\
    -Q &-A^\top
    \end{bmatrix} \begin{bmatrix}
        I \\ P_2 P_1^{-1}
    \end{bmatrix} = \begin{bmatrix}
        \hat{T} \\ P_2 P_1^{-1} \hat{T}
    \end{bmatrix}
\end{equation}
These give two set of matrix equation
\begin{align}
    \hat{T} = A - BR^{-1}B^\top P_2 P_1^{-1},\\
    -Q - A^\top P_2P_1^{-1} = P_2 P_1^{-1} \hat{T}.
\end{align}
Substituting the first equation into second we obtain
\begin{equation}
Q + A^\top P_2 P_1^{-1} + P_2 P_1^{-1} A - P_2P_1^{-1} BR^{-1} B^\top P_2 P_1^{-1} = 0,
\end{equation}
which follow the ARE form~\eqref{eq: continuous ARE} and complete the proof.
\end{proof}
We can then apply the eigen-decomposition\footnote{If $H$ is not diagonalizable, we can still apply the Schur decomposition. The results remain the same in this case.} to obtain invariant subspace for $H$:
\begin{equation}
    H = V \Lambda V^{-1} = \begin{bmatrix}
        P_1 &S_1\\ P_2 &S_2
    \end{bmatrix} \begin{bmatrix}
        \Lambda_1 &0\\
        0 &\Lambda_2 
    \end{bmatrix} \begin{bmatrix}
        P_1 &S_1\\ P_2 &S_2
    \end{bmatrix}^{-1}, \label{eq: H decompose}
\end{equation}
where $[S_1^\top, S_2^\top]^\top$ is $2n\times n$ matrices that store other $n$ eigenvectors.
By choosing different combination of columns in the eigen-space, we can form $\binom{2n}{n}$ different invariant subspace and each of them yields at least one solution to HJB equation.

\subsection{Behavior of Closed-Loop System under General Solutions}
Substituting the policy~\eqref{eq: opt u for continuous-LQR} into the open-loop dynamics in~\eqref{eq: continuous-LQR} yields the dynamics of the closed-loop system:
\begin{equation}
    \dot{\mathbf{x}} (t) = (A - B R^{-1} B^\top P) \ \mathbf{x}(t) = A_{cl} \mathbf{x}(t) \label{eq: closed loop dyn}.
\end{equation}
Here, we study the behavior of closed-loop system~\eqref{eq: closed loop dyn} by relating the eigenvalues of $A_{cl}$ to $H$.

\begin{lemma}\label{lemma: no imaginary eigenvalues}
If the pair $(A,B)$ is controllable and the pair $(Q,A)$ is observable, the Hamiltonian matrix $H$~\eqref{eq: hamiltonian} has no pure imaginary or zero eigenvalues. 
\end{lemma}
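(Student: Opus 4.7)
The plan is to exploit the symplectic/Hamiltonian structure of $H$ together with the controllability and observability hypotheses via a Popov--Belevitch--Hautus (PBH) style argument. Concretely, I will assume for contradiction that $i\omega$ is an eigenvalue of $H$ for some $\omega \in \mathbb{R}$ (allowing $\omega=0$ to handle the zero-eigenvalue case uniformly), with nontrivial eigenvector $[v^\top, w^\top]^\top$. Writing out the block rows of $H[v;w] = i\omega [v;w]$ gives the pair
\begin{equation}
    A v - BR^{-1}B^\top w = i\omega v, \qquad -Q v - A^\top w = i\omega w.
\end{equation}

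Next, I will take the Hermitian inner product of the first equation with $w$ and of the second equation with $v$, then combine them so that the cross-terms $w^* A v$ and $v^* A^\top w$ cancel. Using that $R^{-1} \succ 0$ and $Q \succeq 0$ are Hermitian, the surviving identity has the form
\begin{equation}
    w^* B R^{-1} B^\top w + v^* Q v = 0.
\end{equation}
Since both quadratic forms are nonnegative, each must vanish individually; together with positive definiteness of $R^{-1}$ and positive semidefiniteness of $Q$, this forces $B^\top w = 0$ and $Q v = 0$ (via $v^* Q v = 0 \Rightarrow Q^{1/2} v = 0 \Rightarrow Q v = 0$).

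Feeding these back into the original eigenvector equations collapses them to $A v = i\omega v$ and $A^\top w = -i\omega w$. Iterating, $Q A^k v = (i\omega)^k Q v = 0$ and $B^\top (A^\top)^k w = (-i\omega)^k B^\top w = 0$ for every $k \ge 0$. Observability of $(Q,A)$ — equivalently, the PBH condition that no right eigenvector of $A$ lies in the kernel of $Q$ — then forces $v = 0$, and controllability of $(A,B)$ — equivalently, the PBH condition that no left eigenvector of $A$ lies in the kernel of $B^\top$ — forces $w = 0$. This contradicts the nontriviality of the eigenvector and establishes the claim for both purely imaginary and zero eigenvalues.

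The only genuinely delicate point is choosing the right combination in the second step: one must take the Hermitian conjugate of the first equation (so that $i\omega$ becomes $-i\omega$) before pairing with the second so that the imaginary parts cancel exactly and the $A$-terms disappear, leaving a real, manifestly nonnegative sum. Everything afterward is the standard PBH reduction, so the core of the argument is essentially that one algebraic manipulation.
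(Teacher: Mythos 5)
Your proposal is correct and follows essentially the same route as the paper's proof: assume a purely imaginary (or zero) eigenvalue, pair the two block equations of $H[v;w]=i\omega[v;w]$ so the $A$-cross-terms cancel and a sum of nonnegative quadratic forms $w^* BR^{-1}B^\top w + v^* Q v$ is forced to vanish, deduce $B^\top w = 0$ and $Qv=0$, and conclude by the PBH tests for observability of $(Q,A)$ and controllability of $(A,B)$. Your writeup is in fact slightly more careful than the paper's about the conjugation step that makes the cross-terms cancel, but the argument is the same.
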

\begin{proof}
See Appendix~\ref{sec: additional proof}.
\end{proof}
\begin{theorem}\label{theorem: number of eigenvalues}
If $\lambda$ is an eigenvalue of $H$, then $-\lambda$ is also an eigenvalues. Furthermore, if the pair $(A,B)$ is controllable and the pair $(Q,A)$ is observable, $H$ has n positive eigenvalues and n negative eigenvalues.
\end{theorem}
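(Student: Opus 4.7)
My plan is to exploit the \emph{Hamiltonian structure} of $H$. Define the symplectic form
\begin{equation*}
    J = \begin{bmatrix} 0 & I_n \\ -I_n & 0 \end{bmatrix}, \qquad J^\top = -J, \qquad J^{-1} = -J.
\end{equation*}
First, I would verify by direct block multiplication that
\begin{equation*}
    JH = \begin{bmatrix} -Q & -A^\top \\ -A & BR^{-1}B^\top \end{bmatrix}
\end{equation*}
is symmetric, using $Q = Q^\top$ and $R^{-1} = R^{-\top}$. This is the defining property that $H$ is a Hamiltonian matrix.

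From $(JH)^\top = JH$ and $J^\top = -J$, I would rearrange to obtain
\begin{equation*}
    H^\top J = -JH \quad \Longleftrightarrow \quad H^\top = -J H J^{-1},
\end{equation*}
so that $H$ is similar to $-H^\top$. Since similar matrices share eigenvalues and $H^\top$ has the same spectrum as $H$, the spectrum of $H$ is invariant under $\lambda \mapsto -\lambda$. This establishes the first claim, and moreover it shows that the algebraic multiplicities of $\lambda$ and $-\lambda$ agree.

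For the second claim, I would invoke Lemma~\ref{lemma: no imaginary eigenvalues}: under the controllability of $(A,B)$ and observability of $(Q,A)$, $H$ has no zero and no purely imaginary eigenvalues. Counting with algebraic multiplicity, the $2n$ eigenvalues of $H$ partition into pairs $\{\lambda_i, -\lambda_i\}$ with $\mathrm{Re}(\lambda_i) \neq 0$. In each pair, exactly one eigenvalue has positive real part and one has negative real part, giving $n$ eigenvalues in the open right half-plane and $n$ in the open left half-plane.

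The only delicate point is handling repeated eigenvalues: I need to be sure the spectral symmetry preserves multiplicities, not merely set-theoretic membership. This follows cleanly from the similarity $H \sim -H^\top$, which implies equality of characteristic polynomials $\det(\lambda I - H) = \det(\lambda I + H^\top) = \det(\lambda I + H)$, so $p_H(\lambda) = p_H(-\lambda)$ as polynomials; hence every multiplicity is matched. With this in hand, the pairing argument in the previous paragraph is rigorous and the theorem follows.
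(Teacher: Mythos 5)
Your proof is correct and follows essentially the same route as the paper: both rest on the symplectic identity (the paper checks $HJ=(HJ)^\top$, you check $JH=(JH)^\top$) to get the $\lambda\mapsto-\lambda$ symmetry, and then invoke Lemma~\ref{lemma: no imaginary eigenvalues} to split the spectrum evenly between the open half-planes. Your version is in fact slightly more careful than the paper's: the similarity $H\sim -H^\top$ gives $p_H(\lambda)=p_H(-\lambda)$ and hence matching algebraic multiplicities, whereas the paper's eigenvector manipulation only shows set membership, and your phrasing in terms of real parts avoids the paper's inaccurate remark that the eigenvalues are all real.
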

\begin{proof}
Let $J = \begin{bmatrix}0 &I\\-I &0\end{bmatrix}$, Then we have $HJ=(HJ)^\top$. 

So $H \mathbf{x} = \lambda \mathbf{x}$ implies 
\begin{align}
J^\top H J J^\top \mathbf{x} = \lambda J^\top \mathbf{x} \nonumber\\
H^\top (J^\top \mathbf{x}) = -\lambda (J^\top \mathbf{x}) \nonumber
\end{align}
Hence, $-\lambda$ is an eigenvalue of $H^\top$ and must be an eigenvalue of $H$. 

By Lemma~\ref{lemma: no imaginary eigenvalues}, the eigenvalues only contains real number, so it must have n positive eigenvalues and $n$ negative eigenvalues.
\end{proof}
\begin{theorem} \label{thm: close-loop eigenvalues}
Suppose $[P_1^\top,P_2^\top]^\top$ are invariant subspace of $H$~\eqref{eq: hamiltonian} with eigenvalues $\Lambda_1 = \text{diag}(\lambda_1, \dots, \lambda_n)$ and $P=P_2P_1^{-1}$, then the characteristic matrix of closed-loop system $A_{cl} = (A - B R^{-1} B^\top P)$ has eigenvalues $\Lambda_1$. 
\end{theorem}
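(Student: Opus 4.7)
The plan is to piggyback directly on the computation already carried out in the proof of Theorem~\ref{thm: solution to ARE}. Recall that from the invariant subspace relation
\begin{equation}
H \begin{bmatrix} P_1 \\ P_2 \end{bmatrix} = \begin{bmatrix} P_1 \\ P_2 \end{bmatrix} T,
\end{equation}
with $T$ an $n\times n$ matrix whose spectrum is precisely $\Lambda_1=\operatorname{diag}(\lambda_1,\dots,\lambda_n)$, we obtained (after right-multiplying by $P_1^{-1}$ and setting $\hat T = P_1 T P_1^{-1}$) the identity
\begin{equation}
\hat T = A - B R^{-1} B^\top P_2 P_1^{-1}.
\end{equation}
Recognising $P = P_2 P_1^{-1}$, this is nothing but $\hat T = A - BR^{-1}B^\top P = A_{cl}$.

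So my first step is to simply re-read that derivation and identify $\hat T$ with $A_{cl}$. The second step is to observe that $\hat T = P_1 T P_1^{-1}$ is a similarity transform of $T$, hence $A_{cl}$ and $T$ share the same spectrum. The third step is to conclude: since the eigenvalues of the restriction of $H$ to the chosen invariant subspace are by hypothesis $\lambda_1,\dots,\lambda_n$, we have $\operatorname{spec}(T)=\Lambda_1$, and therefore $\operatorname{spec}(A_{cl})=\Lambda_1$.

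The only small subtlety, and the place I would pause, is the assertion that the spectrum of $T$ equals $\Lambda_1$. If we obtained $[P_1^\top,P_2^\top]^\top$ from the eigendecomposition of $H$ (the diagonalisable case in~\eqref{eq: H decompose}), then $T$ is literally $\Lambda_1$ and this is immediate. In the non-diagonalisable case one invokes the Schur form promised in the footnote: $T$ is then upper-triangular with the selected eigenvalues of $H$ on its diagonal, so again $\operatorname{spec}(T)=\Lambda_1$. No further machinery is required, and the invertibility of $P_1$ is already an assumption, so no genuine obstacle remains — the theorem is essentially a corollary of the construction in Theorem~\ref{thm: solution to ARE}.
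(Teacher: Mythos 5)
Your proposal is correct and follows essentially the same route as the paper: the paper extracts the top block row of $H[P_1^\top,P_2^\top]^\top=[P_1^\top,P_2^\top]^\top\Lambda_1$ and substitutes $P_2=PP_1$ to get $(A-BR^{-1}B^\top P)P_1=P_1\Lambda_1$, which is exactly your identity $A_{cl}=\hat T=P_1TP_1^{-1}$ followed by the similarity argument. Your explicit remark about the Schur form in the non-diagonalizable case is a minor added care that the paper relegates to a footnote, but it does not change the substance of the argument.
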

\begin{proof}
By invariant subspace, we have
\begin{equation}
     \begin{bmatrix}
    A &-BR^{-1}B^\top\\
    -Q &-A^\top
    \end{bmatrix} \begin{bmatrix}P_1\\P_2\end{bmatrix} = \begin{bmatrix}P_1\\P_2\end{bmatrix} \Lambda_1,
\end{equation}
which give
\begin{equation}
\begin{array}{cc}
     & AP_1 - BR^{-1} B^\top P_2 = P_1 \Lambda_1 \\
     & -QP_1 -A^\top P_2 = P_2 \Lambda_1
\end{array}
\end{equation}
Substituting $P_2 = P P_1$ into the first equation, we obtain
\begin{equation}
    (A - BR^{-1} B^\top P) P_1 = P_1 \Lambda_1, 
\end{equation}
which completes the proof.
\end{proof}
\begin{corollary}\label{corollary: only one sol make stable}
Given the conditions in Lemma~\ref{lemma: no imaginary eigenvalues}, and assume $H$ diagonalizable, there exists at least $\binom{2n}{n}$ matrix $P$s that satisfies ARE~\eqref{eq: continuous ARE}, but \textbf{only one} make close-loop system~\eqref{eq: closed loop dyn} stable.
\end{corollary}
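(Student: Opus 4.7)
The plan is to combine the three preceding results into a two-step argument: Theorem~\ref{thm: solution to ARE} enumerates candidate solutions via invariant subspaces of the Hamiltonian $H$, while Theorems~\ref{theorem: number of eigenvalues} and \ref{thm: close-loop eigenvalues} together isolate which single subspace delivers stability.

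For the counting, I would exploit diagonalizability: $H$ admits $2n$ linearly independent eigenvectors $v_1, \dots, v_{2n}$, and any $n$-element subset of them spans an $H$-invariant subspace, yielding $\binom{2n}{n}$ such subspaces in total. Writing any chosen basis in block form $[P_1^\top, P_2^\top]^\top$ and applying Theorem~\ref{thm: solution to ARE} produces a candidate $P = P_2 P_1^{-1}$ satisfying the ARE~\eqref{eq: continuous ARE}, provided $P_1$ is invertible. To confirm these candidates are genuinely distinct across choices, I would invoke Theorem~\ref{thm: close-loop eigenvalues}: the resulting closed-loop matrix $A_{cl}$ inherits the spectrum of the selected invariant subspace, so two different selections of $n$ eigenvalues yield closed-loop matrices with different spectra and therefore different $P$s.

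For the stability half the chain is immediate. By Theorem~\ref{thm: close-loop eigenvalues}, the closed-loop system~\eqref{eq: closed loop dyn} is asymptotically stable if and only if every eigenvalue in the chosen set $\Lambda_1$ lies in the open left half-plane. Lemma~\ref{lemma: no imaginary eigenvalues} and Theorem~\ref{theorem: number of eigenvalues} together imply that the spectrum of $H$ is purely real and splits as exactly $n$ negative and $n$ positive eigenvalues. The only $n$-element subset lying entirely in the left half-plane is the one containing all $n$ negative eigenvalues, giving a unique stabilizing $P$.

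The main technical subtlety lies in the invertibility of $P_1$: Theorem~\ref{thm: solution to ARE} only produces a valid $P$ when $P_1$ is nonsingular, and a priori certain mixed selections of eigenvectors could violate this. For the all-negative selection, nonsingularity is the classical stabilizing-Riccati fact under the controllability/observability hypotheses, so the unique stabilizing $P$ is always realized; for the remaining $\binom{2n}{n} - 1$ selections, the ``at least'' phrasing of the corollary gracefully absorbs any degenerate subspaces where $P_1$ collapses, while the generic case yields exactly $\binom{2n}{n}$ distinct solutions as claimed.
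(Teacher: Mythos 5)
Your proposal is correct and follows essentially the same route as the paper's own proof: select $n$ of the $2n$ eigenvectors of $H$ to form $\binom{2n}{n}$ invariant subspaces, obtain a candidate $P$ for each via Theorem~\ref{thm: solution to ARE}, and then use Theorem~\ref{theorem: number of eigenvalues} together with Theorem~\ref{thm: close-loop eigenvalues} to conclude that only the all-negative-eigenvalue selection stabilizes the closed loop. You are in fact slightly more careful than the paper in flagging the $P_1$-invertibility caveat and in arguing distinctness of the resulting $P$'s via their closed-loop spectra.
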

\begin{proof}
By picking different sets of $n$ eigenvector from $2n$-dimensional eigenspace of~\eqref{eq: H decompose}, we can construct $\binom{2n}{n}$ invariant subspaces $\mathcal{M}$s, each invariant subspace corresponding to a solution $P$.
By Theorem~\ref{theorem: number of eigenvalues} and Theorem~\ref{thm: close-loop eigenvalues}, the closed-loop system is stable, only when eigenvectors with all negative $\lambda$ are selected.  
\end{proof}
A similar result to Corollary~\ref{corollary: only one sol make stable} holds for the discounted cost setting, where at most one solution leads to a stable closed-loop system. 
More details are provided in Appendix~\ref{sec: additional proof}.
\subsection{The Discrete-Time Case}
The results for the discrete-time case are similar to their continuous-time counterpart; however, the proof is more involved. 
Therefore, we present the main theorems for the discrete-time system in the main text and leave the proof in Appendix~\ref{sec: additional proof}.

We consider the discrete-time LQR prblem:
\begin{equation}
\begin{array}{cl}
      \min \quad & \sum_{i=t}^\infty \mathbf{x}_i^\top Q \mathbf{x}_i + \mathbf{u}_i^\top R \mathbf{u}_i  \\
     \text{subject to} \quad & \mathbf{x}_{i+1} = A \mathbf{x}_i + B \mathbf{u}_i, \quad i=t, t+1, \dots 
\end{array}. \label{eq: discrete LQR}
\end{equation}
The discrete-time algebraic-Riccati equation of~\eqref{eq: discrete LQR} is given by
\begin{equation}
    P = Q + A^\top P A - A^\top P B (R + B^\top P B)^{-1} B^\top P A. \label{eq: discrete ARE} 
\end{equation}
Similar to the continuous-time case, we construct the Hamiltonian matrix as:
\begin{equation}
    H = \begin{bmatrix}
        A + BR^{-1}B^\top A^{-\top} Q & - BR^{-1}B^\top A^{-\top}\\
        -A^{{-\top}}Q &A^{-\top}
    \end{bmatrix} \label{eq: discrete-Hamitonian}.
\end{equation}

\begin{theorem}\label{thm: solutions to discrete-ARE}
Suppose $[P_1^\top,P_2^\top]^\top$ are invariant subspace of discrete-time Hamitonian matrix $H$~\eqref{eq: discrete-Hamitonian}, then $P=P_2P_1^{-1}$ is the solution to discrete-time algebraic-Riccati equation~\eqref{eq: discrete ARE}.
\end{theorem}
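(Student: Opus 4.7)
The plan is to mimic the continuous-time proof of Theorem~\ref{thm: solution to ARE}: extract from the invariance relation an auxiliary $n\times n$ matrix $T$, split the block-equation into two matrix equations, eliminate $T$, and then match the resulting identity to the discrete ARE~\eqref{eq: discrete ARE}. The key difference from the continuous case is that the algebraic manipulation at the end will not directly yield the discrete ARE — a matrix inversion lemma (Woodbury identity) will be required to transform the expression into the standard form with $(R + B^\top P B)^{-1}$.

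Concretely, I would start from the invariance property, which gives an $n \times n$ matrix $T$ with
\begin{equation}
    H \begin{bmatrix} P_1 \\ P_2 \end{bmatrix} = \begin{bmatrix} P_1 \\ P_2 \end{bmatrix} T.
\end{equation}
Writing $H$ in block form~\eqref{eq: discrete-Hamitonian}, this splits into
\begin{align}
(A + BR^{-1}B^\top A^{-\top} Q)P_1 - BR^{-1}B^\top A^{-\top} P_2 &= P_1 T, \\
-A^{-\top} Q P_1 + A^{-\top} P_2 &= P_2 T.
\end{align}
I would then right-multiply both equations by $P_1^{-1}$, define $\tilde{T} = P_1 T P_1^{-1}$ and $P = P_2 P_1^{-1}$, so that the two equations reduce to
\begin{align}
A - B R^{-1} B^\top A^{-\top}(P - Q) &= \tilde{T}, \\
A^{-\top}(P - Q) &= P\tilde{T}.
\end{align}

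Substituting the first into the second eliminates $\tilde{T}$ and, after collecting the terms containing $A^{-\top}(P - Q)$, yields
\begin{equation}
(I + P B R^{-1} B^\top)\, A^{-\top}(P - Q) = P A,
\end{equation}
equivalently
\begin{equation}
P = Q + A^\top (I + P B R^{-1} B^\top)^{-1} P A.
\end{equation}
The main obstacle is now purely algebraic: showing that this is the same as the discrete ARE~\eqref{eq: discrete ARE}. I would apply the Woodbury matrix identity to verify
\begin{equation}
(I + P B R^{-1} B^\top)^{-1} P \;=\; P - P B (R + B^\top P B)^{-1} B^\top P,
\end{equation}
which can be checked directly by right-multiplying by $(I + P B R^{-1} B^\top)$ and simplifying using $R^{-1}(R + B^\top P B) = I + R^{-1} B^\top P B$. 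Substituting this identity back gives
\begin{equation}
P = Q + A^\top P A - A^\top P B(R + B^\top P B)^{-1} B^\top P A,
\end{equation}
which is exactly~\eqref{eq: discrete ARE} and completes the proof. The only subtlety worth flagging in the writeup is that one must ensure $P_1$ and $I + P B R^{-1} B^\top$ are invertible; the latter follows automatically from $R \succ 0$ whenever the resulting $P$ is symmetric positive semidefinite, and invertibility of $P_1$ is part of the hypothesis in direct parallel with Theorem~\ref{thm: solution to ARE}.
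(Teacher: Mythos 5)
Your proposal follows essentially the same route as the paper's proof: eliminate the auxiliary matrix $\hat{T}$ from the two block equations, arrive at $(I + PBR^{-1}B^\top)A^{-\top}(P-Q) = PA$, and convert to the standard form of~\eqref{eq: discrete ARE} via the push-through/Woodbury identity $(R^{-1}B^\top)(I+PBR^{-1}B^\top)^{-1} = (R+B^\top PB)^{-1}B^\top$, which is exactly the identity the paper invokes. The one substantive weak point is your justification for the invertibility of $I + PBR^{-1}B^\top$: arguing from $P \succeq 0$ does not cover the general solutions this theorem is about (most of the $\binom{2n}{n}$ solutions are not positive semidefinite); the paper instead proves invertibility by contradiction directly from the derived equation, namely that a nonzero row vector $y$ annihilating $I + PBR^{-1}B^\top$ would force $yPA = 0$, hence $yP = 0$ by invertibility of $A$, hence $y = y(I + PBR^{-1}B^\top) = 0$. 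Also, verifying your Woodbury identity requires \emph{left}-multiplication by $I + PBR^{-1}B^\top$ (or right-multiplication by $I + BR^{-1}B^\top P$ after a push-through), not right-multiplication as written, though this is easily fixed.
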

\begin{proof}
See Appendix~\ref{sec: additional proof}.
\end{proof}

In the discrete-time problem, the dynamics of closed-loop is given by
\begin{equation}
    \mathbf{x}_{i+1} =(A-B(R+B^\top P B)^{-1}B^\top PA)\mathbf{x}_i = A_{cl} x_i, \quad i=t, t+1, \dots,  
\end{equation}
where the stability is determined by whether all eigenvalues of $A_{cl}$  are within unit circle. 
We now relate the eigenvalues of closed-loop system matrix $A_{cl}$ with Hamiltonian $H$ for the discrete-time system.

\begin{theorem}\label{thm: number of eigenvalues compare to unit cycle}
If $\lambda$ is an eigenvalue of the discrete-time Hamiltonian matrix $H$~\eqref{eq: discrete-Hamitonian}, then $\frac{1}{\lambda}$ is also a eigenvalue of $H$. Moreover, if the pair $(A,B)$ is controllable and the pair $(Q,A)$ is observable, then $H$ has n eigenvalues within the unit circle and n eigenvalues outside the unit cycle.
\end{theorem}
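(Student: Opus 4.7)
The plan mirrors the continuous-time argument behind Theorem~\ref{theorem: number of eigenvalues}, replacing its Hamiltonian/skew-symmetric structure by the symplectic structure native to discrete time. I would first verify that the discrete-time Hamiltonian in~\eqref{eq: discrete-Hamitonian} satisfies
\begin{equation*}
    H^\top J H = J, \qquad J = \begin{bmatrix} 0 & I \\ -I & 0 \end{bmatrix},
\end{equation*}
by a direct block-matrix computation exploiting $Q^\top = Q$ and the symmetry of $BR^{-1}B^\top$. This identity gives $H^{-1} = J^{-1} H^\top J$, so $H^{-1}$ is similar to $H^\top$ and hence shares its spectrum with $H$. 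Consequently $\lambda \in \sigma(H) \iff 1/\lambda \in \sigma(H)$, proving the first assertion.

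For the counting claim I would argue by contradiction that no eigenvalue lies on the unit circle. Write $v = [x^\top, p^\top]^\top$ for an eigenvector with $Hv = \lambda v$ and $|\lambda| = 1$. Expanding the two block rows and eliminating the $A^{-\top}$ factors yields
\begin{align*}
    (A - \lambda I)\, x &= \lambda\, B R^{-1} B^\top\, p, \\
    (A^\top - \bar\lambda I)\, p &= -\bar\lambda\, Q\, x,
\end{align*}
using $\lambda^{-1} = \bar\lambda$ on $|\lambda| = 1$. Computing $p^*(A - \lambda I)x$ once from the first line and once via the adjoint of the second line then collapses into the sign-definite identity
\begin{equation*}
    p^*\, B R^{-1} B^\top\, p = -\, x^* Q\, x.
\end{equation*}
Since $R \succ 0$ and $Q \succeq 0$, both sides must vanish, forcing $B^\top p = 0$ and $Q x = 0$. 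Substituting back reduces the system to $A x = \lambda x$ and $A^\top p = \bar\lambda p$.

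If $x \neq 0$, the pair $(A x = \lambda x,\ Q x = 0)$ violates the Popov--Belevitch--Hautus rank test for observability of $(Q,A)$; if $x = 0$, then $p \neq 0$ satisfies $p^* A = \lambda p^*$ and $p^* B = 0$, violating the PBH test for controllability of $(A,B)$. Either way we contradict the hypotheses, so $H$ has no eigenvalue of modulus one. Combined with the reciprocal pairing from the first step, the $2n$ eigenvalues partition into $n$ strictly inside and $n$ strictly outside the unit circle.

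The main obstacle is this second step. The continuous-time Lemma~\ref{lemma: no imaginary eigenvalues} exploits $\mathrm{Re}(\lambda) = 0$ directly; transporting the argument to $|\lambda| = 1$ requires the observation that on the unit circle $\lambda^{-1} = \bar\lambda$ turns the two block equations into Hermitian adjoints of each other up to an explicit scalar factor, which is precisely what collapses the two Riccati-flavoured identities into a single sign-definite equality. The symplectic verification in the first step is mechanical, and the PBH reductions at the end mirror the continuous-time proof almost verbatim.
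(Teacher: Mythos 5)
Your proposal is correct and follows essentially the same route as the paper: the symplectic identity $H^\top J H = J$ yields the $\lambda \leftrightarrow 1/\lambda$ pairing, and the unit-circle exclusion is obtained by collapsing the two block equations into the sign-definite identity $p^* BR^{-1}B^\top p = -x^* Q x$ and invoking the PBH tests, exactly as in the paper's Lemma~\ref{lemma: no eigenvalue in unit cycle}. Your write-up of the elimination step is in fact cleaner than the paper's, but the underlying argument is identical.
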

\begin{proof}
    See Appendix~\ref{sec: additional proof}.
\end{proof}

\begin{theorem}\label{thm: eigenvalues and closed-loop system}
Suppose $[P_1^\top,P_2^\top]^\top$ are invariant subspace of $H$~\eqref{eq: discrete-Hamitonian} with eigenvalues $\Lambda_1 = \text{diag}(\lambda_1, \dots, \lambda_2)$ and $P=P_2P_1^{-1}$, then characteristic matrix of closed-loop system $A_{cl} = (A-B(R+B^\top P B)^{-1}B^\top PA)$ has eigenvalues $\Lambda_1$. 
\end{theorem}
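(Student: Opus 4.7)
The plan is to mirror the structure of the continuous-time proof (Theorem~\ref{thm: close-loop eigenvalues}), starting from the invariant-subspace identity $H[P_1^\top,P_2^\top]^\top = [P_1^\top,P_2^\top]^\top \Lambda_1$ and unpacking it into two matrix equations. With the discrete-time Hamiltonian~\eqref{eq: discrete-Hamitonian}, these read (schematically)
\begin{align}
(A + BR^{-1}B^\top A^{-\top} Q)P_1 - BR^{-1}B^\top A^{-\top} P_2 &= P_1 \Lambda_1,\\
-A^{-\top} Q P_1 + A^{-\top} P_2 &= P_2 \Lambda_1.
\end{align}
From the second equation I would isolate $P_2 - Q P_1 = A^\top P_2 \Lambda_1$ and substitute this into the first equation; after the $A^{-\top} A^\top$ cancellation, the first equation collapses to $A P_1 = P_1 \Lambda_1 + B R^{-1} B^\top P_2 \Lambda_1$. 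Using $P_2 = P P_1$ from the definition $P = P_2 P_1^{-1}$, this becomes $A P_1 = (I + B R^{-1} B^\top P) P_1 \Lambda_1$.

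The next step, and the one that genuinely differs from the continuous-time argument, is converting the $R^{-1}$-form into the $(R+B^\top PB)^{-1}$-form that appears in $A_{cl}$. The plan is to invoke the Woodbury (Sherman–Morrison) identity
\begin{equation}
(I + B R^{-1} B^\top P)^{-1} = I - B(R + B^\top P B)^{-1} B^\top P,
\end{equation}
which gives $(I + B R^{-1} B^\top P)^{-1} A = A - B(R + B^\top P B)^{-1} B^\top P A = A_{cl}$. Applying this to both sides of $A P_1 = (I + B R^{-1} B^\top P) P_1 \Lambda_1$ yields $A_{cl} P_1 = P_1 \Lambda_1$. Since $P_1$ is invertible (this is the same standing assumption used in Theorem~\ref{thm: solutions to discrete-ARE}), the columns of $P_1$ are eigenvectors of $A_{cl}$ with eigenvalues $\Lambda_1$, completing the proof.

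The main obstacle is really just the Woodbury step: in the continuous case the gain is exactly $R^{-1}B^\top P$, so the identification with $A_{cl}$ is immediate, whereas here one must algebraically reconcile two different Riccati-style feedback forms. A minor bookkeeping issue is verifying that $A$ is invertible so that $A^{-\top}$ in $H$ (and the cancellation $A^{-\top}A^\top = I$) is well-defined; this is the standard invertibility assumption accompanying the discrete-time Hamiltonian formulation~\eqref{eq: discrete-Hamitonian} and is inherited from Theorem~\ref{thm: solutions to discrete-ARE}, so no new hypothesis is introduced.
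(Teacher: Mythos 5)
Your proof is correct, but it takes a genuinely different route from the paper's. The paper works from the \emph{second} block row of the invariant-subspace identity and invokes the discrete ARE (via Theorem~\ref{thm: solutions to discrete-ARE}) in the form $P A_{cl} = A^{-\top}(P-Q)$, arriving at $P_2 P_1^{-1} A_{cl} P_1 = P_2 \Lambda_1$ and reading off the eigenvalues from there; note that this final step strictly requires $P_2$ to be injective (if $P_2$ is singular, $P_2 M = P_2 \Lambda_1$ does not force $M = \Lambda_1$), an assumption the paper leaves implicit. You instead eliminate the $Q P_1$ term using the second block row, reduce the \emph{first} block row to $A P_1 = (I + BR^{-1}B^\top P)P_1\Lambda_1$, and convert the gain via the push-through/Woodbury identity $(I + BR^{-1}B^\top P)^{-1} = I - B(R+B^\top PB)^{-1}B^\top P$ to land directly on $A_{cl}P_1 = P_1\Lambda_1$. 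With $P_1$ invertible this is an honest similarity $P_1^{-1}A_{cl}P_1 = \Lambda_1$, so your argument is both closer in spirit to the continuous-time proof of Theorem~\ref{thm: close-loop eigenvalues} and tighter than the paper's: it needs no condition on $P_2$ and does not rely on $P$ solving the discrete ARE. The only extra cost is the Woodbury step, whose invertibility hypothesis ($R + B^\top PB$ nonsingular) is already needed for $A_{cl}$ to be defined, and whose algebra is essentially the identity $R^{-1}B^\top(I+PBR^{-1}B^\top)^{-1} = (R+B^\top PB)^{-1}B^\top$ that the paper itself derives in the proof of Theorem~\ref{thm: solutions to discrete-ARE}.
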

\begin{proof}
See Appendix~\ref{sec: additional proof}
\end{proof}
Given the Theorem~\ref{thm: number of eigenvalues compare to unit cycle} and Theorem~\ref{thm: eigenvalues and closed-loop system}, we have similar results for the discrete-time system, i.e., only one among $\binom{2n}{n}$ general solution to Bellman equation leads to stable closed-loop system. 
\subsection{The Nonlinear Case}
Here, we discuss the general solutions to HJB equation for nonlinear dynamics.
In many applications, optimal control problems are highly structured.
For example, in robotics, a common assumption is that the dynamics are control-affine and the running cost is separable and convex in the control input.
In such scenarios, the optimal control problem can often be solved analytically~\citep{lutter2021value}.
Specifically, consider the optimal control problem formulated as follows:
\begin{equation}
\begin{array}{cl}
      \min \quad & \int_t^\infty e^{-\frac{s}{\tau}}[l_1(\mathbf{x}(s)) + \mathbf{u}(s)^\top R \mathbf{u}(s)] ds  \\
     \text{subject to} \quad & \dot{\mathbf{x}}(s) = f_1( \mathbf{x}(s)) + f_2(\mathbf{x}(s)) \mathbf{u}(s), \quad \mathbf{x}(t) \ \text{given},
\end{array} \label{eq: nonlinear control problem}
\end{equation}
where $f_1: \mathcal{X} \rightarrow \mathbb{R}^n, f_2: \mathcal{X} \rightarrow \mathbb{R}^{n\times m}, l_1: \mathcal{X} \rightarrow \mathbb{R}_+$ and $R =R^\top \succ 0 \in \mathbb{R}^{m\times m}$.

The optimal policy synthesis problem~\eqref{eq: optimal policy given value function} is quadratic in control and can be obtained analytically:
\begin{align}
    \mathbf{u}^*(t) =  &\underset{\mathbf{u} \in \mathcal{U}}{\text{argmin}} \ \Bigg[ l_1(\mathbf{x}(t)) + \mathbf{u}^\top R \mathbf{u} + \frac{\partial \mathcal{V}(\mathbf{x}(t))}{\partial \mathbf{x}}^\top  (f_1( \mathbf{x}(t)) + f_2(\mathbf{x}(t)) \mathbf{u}) \Bigg] \nonumber\\
    = &-\frac{1}{2} R^{-1} f_2^\top(\mathbf{x}(t)) \frac{\partial \mathcal{V}(\mathbf{x}(t))}{\partial \mathbf{x}} \label{eq: nonlinear optimal u}.
\end{align}
Substituting to~\eqref{eq: nonlinear optimal u} into the HJB equation~\eqref{eq: HJB} yields the elliptic partial differential equation:
\begin{equation}
    \frac{1}{\tau} \mathcal{V}(\mathbf{x}(t)) = -\frac{1}{4} \frac{\partial \mathcal{V}(\mathbf{x}(t))}{\partial \mathbf{x}}^\top f_2(\mathbf{x}(t)) R^{-1} f_2(\mathbf{x}(t))^\top \frac{\partial \mathcal{V}(\mathbf{x}(t))}{\partial \mathbf{x}} + f_1(\mathbf{x}(t))^\top \frac{\partial \mathcal{V}(\mathbf{x}(t))}{\partial \mathbf{x}} + l_1(\mathbf{x}(t)) . \label{eq: pde}
\end{equation}
Thus, solving the HJB equation for the nonlinear control problem~\eqref{eq: nonlinear control problem} is equivalent to solving the elliptic partial differential equation~\eqref{eq: pde}.

In general, the elliptic partial differential equation~\eqref{eq: pde} does not admit a unique solution. However, the discussion of general solutions to nonlinear elliptic partial differential equations is beyond the scope of this paper, and we refer readers to~\citet{han2011elliptic,evans2022partial} for further details.

\section{A Canonical Failure Mode in Learning Value Functions}\label{sec: canonical failure}
\begin{wrapfigure}{h}{0.5\textwidth}
    \vspace{-20pt}
    \begin{center}
    \includegraphics[width=0.45\textwidth]{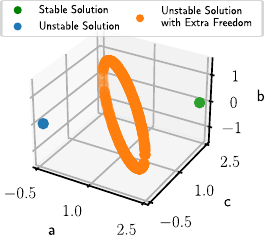}
    \end{center}
    \caption{Solution to LQR: $\mathcal{V}(\mathbf{x}) = \mathbf{x}^\top P \mathbf{x}$, where $P$ is given by~\eqref{eq: toy sol}. 
    The green dot indicates the stable solution, the blue dot marks the unstable solution, and the orange ring represents additional solutions arising from the noninvertibility of $P_1$.} 
    \vspace{-70pt}
    \label{fig:ARE Sol}
\end{wrapfigure}
A generic neural network may converge to any solution of the Bellman equation by minimizing the TD error.
Moreover, it is more likely for the network to converge to undesired solution given the imbalance ratio of solutions.
Here, we present an example to show such failure mode do exist in practice. 

Consider the time-continuous LQR~\eqref{eq: continuous-LQR} with the parameters $A=B=Q=R=\begin{bmatrix}1 &0\\0 &1\end{bmatrix}$.

We obtain the analytical solution by using the subspace invariant method discussed previously:
\begin{align}
    H &= \begin{bmatrix}
        1 &0 &-1 &0\\
        0 &1 &0 &-1\\
        -1 &0 &-1 &0\\
        0 &-1 &0 &-1
    \end{bmatrix} = V \Lambda V^{-1},
\end{align}
where 
\begin{equation}
    V = \begin{bmatrix}
    1+\sqrt{2} &0 &1-\sqrt{2} &0 \\
    0 &1+\sqrt{2} &0 &1-\sqrt{2} \\
    -1 &0 &-1 &0 \\
    0 &-1 &0 &-1 
    \end{bmatrix}, \Lambda =\begin{bmatrix}
        \sqrt{2} &0 &0 &0\\
        0 &\sqrt{2} &0 &0\\
        0 &0 &-\sqrt{2} &0\\
        0 &0 &0 &-\sqrt{2}
    \end{bmatrix} .
\end{equation}

By selecting columns from the matrix $V$, we can obtain different solution matrices $P$. 
Note that when the first and third or second and fourth columns are selected, i.e., $P_1 = \begin{bmatrix}1+\sqrt{2} &1-\sqrt{2}\\0 &0\end{bmatrix}, P_2=\begin{bmatrix}-1 &-1\\0 &0\end{bmatrix}$ or $P_1 = \begin{bmatrix}0 &0\\1-\sqrt{2} &1-\sqrt{2}\end{bmatrix}, P_2=\begin{bmatrix}0 &0\\-1 &-1\end{bmatrix}$, the matrix $P_1$ is not invertible, introducing additional degrees of freedom. 
This leads to an infinite number of unstable solutions.In summary, the analytical solution can be expressed as 
$P=\begin{bmatrix}a &b\\b &c\end{bmatrix}$, where
\begin{align}
\begin{cases}
a = 1 + \sqrt{2}, b=0, c=1 + \sqrt{2} \ \text{(Stable Solution)}\\
a = 1 - \sqrt{2}, b=0, c=1 - \sqrt{2} \ \text{(Unstable Solution)}\\
a = z, b=\sqrt{2z+1-z^2}, c=2-z, z\in(1-\sqrt{2}, 1+\sqrt{2}) \ \text{(Unstable with Extra Freedom)}
\end{cases}. \label{eq: toy sol}
\end{align}
These solutions are visualized in Figure~\ref{fig:ARE Sol}.

Now, we solve the LQR problem with a learning approach. 
Specifically, we use a 3-layer MLP architecture with 128 neurons in each layer and the ELU activation function to approximate the value function. 
The weights are initialized with Lecun normal~\citep{klambauer2017self} and the biases are initialized to zero.
We collect $10^4$ samples uniformally from $[-2, 2] \times[-2, 2]$.
The neural network is then trained by minimizing the mean square error of TD residuals~\eqref{eq: continuous-time TD} using Adam optimizer with learning rate $10^{-3}$.
We train the networks for $1000$ epochs and achieve a mean square error below $10^{-4}$.

\begin{figure}[hb]
    \begin{center}
    \vskip -5pt
    \includegraphics[width=0.95\textwidth]{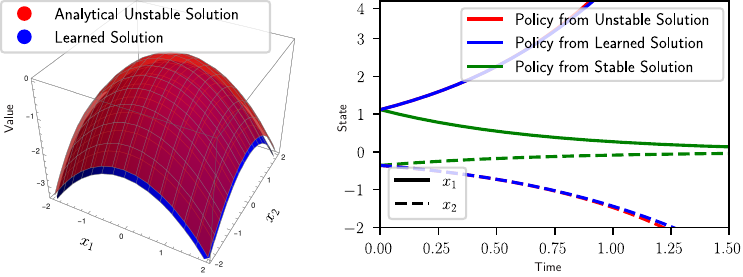}
    \end{center}
    \vskip -12pt
    \caption{Learned solution to LQR. Left figure: the learned value function converges to the unstable solution, up to an additive constant. Right figure: both the learned and analytical unstable solutions yield identical diverging trajectories, whereas the stable solution converges to the origin.}
    \vskip -8pt
    \label{fig:lqr sol}
\end{figure}
Figure~\ref{fig:lqr sol} compares the learned and analytical solutions. In this case, the neural network converges to the unstable solution, and the associated policy causes trajectories to diverge. 
The issue of non-unique solutions also arises in \textbf{nonlinear} dynamics, as we will demonstrate in Section~\ref{sec:potential solutions}. 
Moreover, we find the solution to which the network converges is highly sensitive to weight initialization (see Appendix~\ref{sec: more details LQR}), suggesting that multiple solutions may partly explain the sensitivity of value-based approaches to hyperparameters and random seeds~\citep{ceron2024consistency}.
\section{What Can We Do}\label{sec:potential solutions}

In this section, we discuss potential solutions to address the failure of the value-based approach in finding the optimal control policy due to the existence of general solutions to the Bellman equation.

\subsection{Approach 1: Adding Boundary Conditions}

Setting boundary conditions is one of the most common techniques for isolating the stable solution of the Bellman equation~\citep{mnih2015human, lillicrap2015continuous}. 
However, selecting appropriate boundary conditions can be challenging in practice. 
Insufficient boundary conditions may fail to guarantee uniqueness, while inconsistent ones can result in no solution at all.
\paragraph{Insufficient Boundary Conditions}
\begin{wrapfigure}{r}{0.4\textwidth}
    \begin{center}
    \vskip -20pt
    \includegraphics[width=0.4\textwidth]{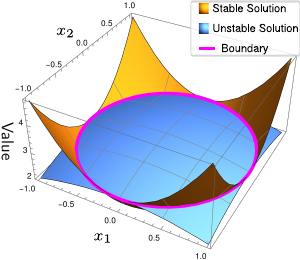}
    \end{center}
    \vskip -5pt
    \caption{Example of insufficient boundary conditions: both solutions share the same boundary value, yet one yields a stable closed-loop system while the other results in instability.} 
    \label{fig:insufficient boundary conditions}
    \vskip -60pt
\end{wrapfigure}
Considering the LQR example in Section~\ref{sec: canonical failure}, we can define a unit circle boundary $\mathcal{B} = \{\mathbf{x}: \mathbf{x}^\top \mathbf{x} =1\}$ and set the Dirichlet boundary condition: $\mathcal{V}(\mathbf{x}) = 1+\sqrt{2}$, for $\mathbf{x} \in \mathcal{B}$.

Under the specified boundary conditions, the stable solution, the stable solution $\mathcal{V}(\mathbf{x}) = \mathbf{x}^\top \begin{bmatrix}1+\sqrt{2} &0\\0 &1+\sqrt{2} \end{bmatrix} \mathbf{x}$ still solves the associated Bellman equation.
However, there is another solution $\mathcal{V}(\mathbf{x}) = \mathbf{x}^\top \begin{bmatrix}1-\sqrt{2} &0\\0 &1-\sqrt{2} \end{bmatrix} \mathbf{x} + 2\sqrt{2}$, which meets both the Bellman equation and the boundary conditions. 
This solution is obtained by adding an offset of $ 2\sqrt{2}$ to the unstable solution $\mathcal{V}(\mathbf{x}) = \mathbf{x}^\top \begin{bmatrix}1-\sqrt{2} &0\\0 &1-\sqrt{2} \end{bmatrix} \mathbf{x}$ by $ 2\sqrt{2}$.
This phenomenon occurs because the Bellman equation and the resulting policy depend only on the relative values of the value function rather than its absolute magnitude.

\paragraph{Inconsistent Boundary Conditions}
Here, we retain the Dirichlet boundary conditions from the previous paragraph i.e., $\mathcal{V}(\mathbf{x}) = 1+\sqrt{2}$, if $\mathbf{x} \in \mathcal{B}_1$, where $\mathcal{B}_1 = \{\mathbf{x}: \mathbf{x}^\top \mathbf{x} =1\}$.
Additionally, we introduce a new boundary: $\mathcal{B}_2 = \{\mathbf{x}: \mathbf{x}^\top \mathbf{x} = 0.5 \}$, and set the Dirichlet boundary condition for this new boundary as: $\mathcal{V}(\mathbf{x}) = 0$ if $\mathbf{x} \in \mathcal{B}_2$. 
In this case, no general solution to Bellman equation can satisfy both boundary conditions.

\subsection{Approach 2: Special Neural Architectures}
For many optimal control problems, the goal is to stabilize the system to an equilibrium point $(\mathbf{x}_\text{eq}, \mathbf{u}_\text{eq})$ or track a reference trajectory.
In this scenario, we can design specialized architectures to exclude the unstable general solutions that the Bellman equation can admit.

\begin{theorem}\label{thm: stable sol with lyapunov conditions}
If $\mathcal{V}(\mathbf{x})$ is a solution to HJB equation~\eqref{eq: HJB} without discounted running cost, where $\mathcal{V}(\mathbf{x}_\text{eq}) = 0$ and $\mathcal{V}(\mathbf{x}) > 0$ for any $\mathbf{x} \not=\mathbf{x}_\text{eq}$, then the closed-loop system under the policy given by~\eqref{eq: optimal policy given value function} is stable. Furthermore, if $l(\mathbf{x}, \mathbf{u}) > 0$ for any state-control pair $(\mathbf{x}, \mathbf{u})$ other than $(\mathbf{x}_\text{eq}, \mathbf{u}_\text{eq})$, then $\lim_{s\rightarrow\infty} \|\mathbf{x}(s) - \mathbf{x}_\text{eq}\| = 0$ for any initial condition $\mathbf{x}(t)$.
\end{theorem}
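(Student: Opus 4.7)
The plan is to recognize $\mathcal{V}$ as a Lyapunov function for the closed-loop system. The hypothesis that $\mathcal{V}(\mathbf{x}_\text{eq}) = 0$ and $\mathcal{V}(\mathbf{x}) > 0$ elsewhere says exactly that $\mathcal{V}$ is positive definite about the equilibrium, so the only analytic work is to show that $\mathcal{V}$ is nonincreasing along trajectories of~\eqref{eq: closed loop dyn} generated by $\pi^*$ from~\eqref{eq: optimal policy given value function}.

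First I would instantiate the undiscounted HJB equation~\eqref{eq: HJB} (with $1/\tau = 0$) at the minimizer $\pi^*(\mathbf{x})$. Since $\mathbf{u}^*(\mathbf{x}) = \pi^*(\mathbf{x})$ achieves the minimum on the right-hand side and the left-hand side vanishes, we obtain the pointwise identity
\begin{equation}
l(\mathbf{x}, \pi^*(\mathbf{x})) + \frac{\partial \mathcal{V}(\mathbf{x})}{\partial \mathbf{x}}^\top f(\mathbf{x}, \pi^*(\mathbf{x})) = 0. \nonumber
\end{equation}
Differentiating $\mathcal{V}$ along any closed-loop trajectory $\mathbf{x}(s)$ and applying the chain rule with $\dot{\mathbf{x}}(s) = f(\mathbf{x}(s), \pi^*(\mathbf{x}(s)))$ gives $\dot{\mathcal{V}}(\mathbf{x}(s)) = -l(\mathbf{x}(s), \pi^*(\mathbf{x}(s))) \leq 0$, since the running cost $l$ is nonnegative by assumption. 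Combined with positive definiteness of $\mathcal{V}$ and $\mathcal{V}(\mathbf{x}_\text{eq}) = 0$, the standard Lyapunov stability theorem yields the first assertion: for every $\varepsilon > 0$ there is $\delta > 0$ with $\|\mathbf{x}(t) - \mathbf{x}_\text{eq}\| < \delta$ implying $\|\mathbf{x}(s) - \mathbf{x}_\text{eq}\| < \varepsilon$ for all $s \geq t$.

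For the second assertion I would strengthen the sign of $\dot{\mathcal{V}}$. Under the extra hypothesis $l(\mathbf{x}, \mathbf{u}) > 0$ whenever $(\mathbf{x}, \mathbf{u}) \neq (\mathbf{x}_\text{eq}, \mathbf{u}_\text{eq})$, any state $\mathbf{x} \neq \mathbf{x}_\text{eq}$ forces $(\mathbf{x}, \pi^*(\mathbf{x})) \neq (\mathbf{x}_\text{eq}, \mathbf{u}_\text{eq})$ regardless of what $\pi^*(\mathbf{x})$ equals, so $\dot{\mathcal{V}}(\mathbf{x}) = -l(\mathbf{x}, \pi^*(\mathbf{x})) < 0$ strictly. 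Positive definiteness and strict decrease then give asymptotic stability; equivalently, LaSalle's invariance principle applied to the set $\{\mathbf{x} : \dot{\mathcal{V}}(\mathbf{x}) = 0\} = \{\mathbf{x}_\text{eq}\}$ shows that every bounded trajectory converges to $\mathbf{x}_\text{eq}$, establishing $\lim_{s\to\infty}\|\mathbf{x}(s) - \mathbf{x}_\text{eq}\| = 0$.

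The main obstacle is a regularity/domain subtlety rather than the core Lyapunov argument: one needs the HJB identity to hold pointwise (so $\mathcal{V}$ must be $C^1$ and the argmin attained, both standard in this paper's setting), and strictly speaking the convergence conclusion is global only if $\mathcal{V}$ is radially unbounded so that sublevel sets are compact. I would either state the result locally on a sublevel set $\{\mathcal{V} \leq c\}$, which is invariant by $\dot{\mathcal{V}} \leq 0$ and compact under mild coercivity, or assume radial unboundedness to promote the conclusion to all initial conditions. This mirrors the classical passage from Lyapunov stability to Barbashin–Krasovskii global asymptotic stability.
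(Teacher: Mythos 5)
Your proof follows essentially the same route as the paper's: instantiate the undiscounted HJB equation at the minimizing control to get $\dot{\mathcal{V}} = -l \le 0$ along closed-loop trajectories, then invoke Lyapunov's theorem with the positive-definiteness hypothesis, strengthening to strict decrease for the convergence claim. Your added caveat about radial unboundedness being needed for the ``any initial condition'' conclusion is a legitimate refinement the paper's proof silently absorbs into its statement of the Lyapunov lemma.
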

\begin{proof}
See Appendix~\ref{sec: additional proof}.
\end{proof}
The key insight here is the value function $\mathcal{V}(\mathbf{x})$ can naturally serve as a Lyapunov function for the closed-loop system with the policy given by~\eqref{eq: optimal policy given value function}. 
Particularly, Bellman equation guarantees $\dot{\mathcal{V}}(\mathbf{x}) < 0$ under the policy~\eqref{eq: optimal policy given value function}. 
Additionally, the definition of the optimal control problem and value function ensure the positive-definiteness conditions for the Lyapunov function, i.e., $\mathcal{V}(\mathbf{x}_\text{eq})=0$ and $\mathcal{V}(\mathbf{x})>0$ for any $\mathbf{x} \not= \mathbf{x}_\text{eq}$.

Theorem~\ref{thm: stable sol with lyapunov conditions} inspires us to design a neural architecture that preserves the positive-definiteness, i.e., ensuring, regardless of how weight $\theta$ are set, we always have $\hat{\mathcal{V}}_\theta(\mathbf{x}_\text{eq})=0$ and $\hat{\mathcal{V}}_\theta(\mathbf{x})>0$ for any $\mathbf{x} \not= \mathbf{x}_\text{eq}$.
We provide a positive-definite architecture below
\begin{align}
    \hat{\mathcal{V}}_\theta(\mathbf{x}) &= h_{N_h}^\top h_{N_h} + \epsilon \|\mathbf{x} - \mathbf{x}_{eq}\|^2_2, \label{eq: architecture}
\end{align}
where 
\begin{equation}
    h_{k+1} = \sigma(\theta_k h_{k}) \ \text{for} \ k=1, 2, \dots, N_h \, ,\text{and} \, h_1 = \mathbf{x} - \mathbf{x}_{eq}.
\end{equation}
Here, $\sigma$ represents nonlinear activation function that satisfies $\sigma(0) = 0$ (e.g., ReLU, tanh), and $\epsilon$ is a small constant (e.g., $10^{-3}$).
For any $\mathbf{x} \neq \mathbf{x}_{eq}$, we have $h_{N_h}^\top h_{N_h} \geq 0$ and $\|\mathbf{x} - \mathbf{x}_{eq}\|_2^2 > 0$, which implies that $\hat{\mathcal{V}}_\theta(\mathbf{x}) > 0$.
For $\mathbf{x}_{eq}$, we have $h_1=0$, and by property of $\sigma$, $h_{k+1} = 0$ for $k\geq 1$; consequently, $\hat{\mathcal{V}}_\theta(\mathbf{x}_{eq}) = 0$.
\begin{figure}[ht]
    \vskip -5pt
    \includegraphics[width=0.95\textwidth]{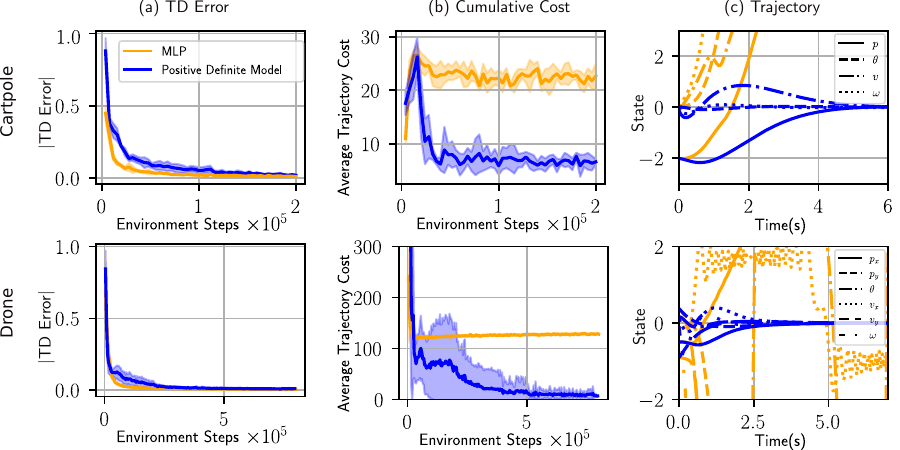}
    \caption{Value learning with MLP and positive-definiteness architecture. 
    For each case, we run the experiments with 5 different seeds, reporting the average performance. 
    The shaded area represents the standard deviation across the runs. The TD error is minimized for both architecture, indicating a solution to Bellman equation is found. However, generic MLP architecture cannot distinguish the stable solution from the others leading to high cumulative cost and diverging behavoir}
    \vskip -5pt
    \label{fig: PD model Learning}
\end{figure}

We apply the proposed architecture to two \textbf{nonlinear} dynamical systems: cartpole and drone.
The drone scenario poses additional challenges due to its intricate dynamics, the need for rapid response, and limited control inputs.
Further details regarding the tasks, learning algorithm, and hyperparameters are provided in Appendix~\ref{sec: experiment setup}.

Figure~\ref{fig: PD model Learning} illustrates the results for both our proposed architecture and a generic MLP.
The absolute TD error is minimized for both architectures, implying that the learning algorithm successfully finds a solution to the Bellman equation in both cases. 
However, the generic MLP architecture cannot distinguish the stable solution from the others, the resulting controller leads to an unstable closed-loop system, causing higher cumulative costs.
These results highlight the presence of non-unique solutions in systems with \textbf{nonlinear} dynamics.
In contrast, by using an architecture that preserves positive definiteness, the learning algorithm consistently converges to the stable solution and achieves lower cumulative costs.

\begin{wraptable}{r}{0.55\textwidth}
    \vskip -5pt
    \caption{Trajectory Cost for Final Policy}
    \begin{center}
        \vskip -10pt
        \begin{tabular}{lllll}
            Methods  &Linear &Cartpole &Drone
            \\ \hline
            Ours &0.87 &6.98 &8.64 \\
            \citet{lutter2020hjb} &4.38 &10.28 &362.31 \\
            \hline
        \end{tabular}
    \end{center}
    \vskip -10pt
    \label{tab: avg trajectory cost}
\end{wraptable}

We compare our method to~\citet{lutter2020hjb,lutter2021value}'s continuous-time RL approach, which employs discount factor i.e., $\frac{s}{\tau}$, scheduling and Lagrangian networks~\citep{lutter2019deep} for learning the stable solution. Evaluations on three tasks (including their original and the more challenging drone system) show our method outperforms theirs, particularly on the drone task (Table~\ref{tab: avg trajectory cost}). 
We find that a large discount factor 
$\frac{s}{\tau}$ complicates optimization, leading the problem essentially ill-defined~\citep{tassa2007least}. Moreover, the Lagrangian network does not guarantee that the equilibrium point is the unique minimizer~\citep{lutter2023robust}—a key property for learning the stable solution. We hypothesize that these factors account for the performance gap between our method and that of~\citet{lutter2020hjb,lutter2021value}.
\section{Related Work}\label{sec: related work}
\paragraph{Uniqueness of Solution to Bellman Equation}
The uniqueness of Bellman equations has been studied in both reinforcement learning and control.
The first proof for the uniqueness of the Bellman equation in finite state and action spaces was provided by Richard Bellman~\citep{bellman1957dynamic}. 
A well-known contraction mapping proof was given by~\citet{blackwell1965discounted}, also included in Appendix~\ref{sec: tabular}.
This idea was extended by~\citet{sutton1988learning}. 
The study in continuous state and action spaces was pioneered by~\citet{bertsekas1996stochastic}, who showed the value function is the unique fixed point of the Bellman equation under regularity conditions.

Recent works have explored non-uniqueness in Bellman equations. For example,~\citet{misztela2018nonuniqueness} shows two distinct semi-continuous solutions to an HJB equation, and~\citet{HOSOYA2024102940} finds infinitely many solutions in a special class of HJB equations. Additionally,~\citet{xuan2024uniqueness} shows that a discount factor of one leads to multiple solutions.
In this work, we demonstrate that non-uniqueness is common in practice, and that the solution space can grows exponentially with state dimension.  
We further study the behavior of the closed-loop system and show that only one solution leads to stability, which presents challenges for value-based methods.
Several key theorems in this work are inspired by solutions to algebraic Riccati equations in control~\citep{willems1971least, lancaster1995algebraic}. We provide an alternative, self-contained, linear-algebraic proof for our main results, without relying on functional analysis tools, contributing to the theoretical foundation.

\paragraph{Neural Architectures for Dynamics and Control} 
The need for stable dynamics and control has led to the development of structured architectures.
\citet{kolter2019learning} propose input-convex networks for stable dynamical systems. Meanwhile \citet{lutter2019deep} introduce Lagrangian networks for physically plausible dynamics. These architectures have been applied to control tasks~\citep{lutter2023continuous}. Positive-definiteness architectures are often explored in safety-critical reinforcement learning~\citep{chang2021stabilizing, brunke2022safe, dawson2022safe, dawson2023safe}, particularly due to their use of Lyapunov or barrier functions. In this work, we show a close relationship between value functions and Lyapunov functions, demonstrating how structured architectures can facilitate training and enhance policy performance for value-based methods.

\section{Conclusion}\label{sec: conclusion}
In this work, we show that solutions to the Bellman equation are generally nonunique in continuous state and action spaces, with the solution space potentially growing exponentially with dimension, a phenomenon we term the ``curse of dimensionality in solution space''. 
While one might mitigate this by setting boundary conditions or designing initialization strategies, we demonstrate that this is often challenging in practice. Instead, we propose using structured architectures to constrain the search space to only desirable solutions, leveraging the close relationship between value functions and Lyapunov functions. 

The primary limitation of this architecture is its applicability mainly to control problems that demand system stability.
In problems such as locomotion, the objective is often to achieve a limit cycle~\citep{underactuated} rather than converging to an equilibrium point.
In these scenarios, a more effective initialization scheme may be necessary to find the desired solution, as we discussed in Appendix~\ref{sec: more details LQR}.
Another alternative is to switch to policy-based approaches~\citep{schulman2017proximal, xu2021accelerated}. These methods do not face the challenges of distinguishing solutions that value-based approaches encounter, as they directly optimize the policy with respect to the performance index.



\bibliography{reference}
\bibliographystyle{rlj}

\beginSupplementaryMaterials
\appendix
\section{Additional Proof}\label{sec: additional proof}

\paragraph{Proof of Lemma~\ref{lemma: no imaginary eigenvalues}} 
\begin{proof}
We proof Lemma~\ref{lemma: no imaginary eigenvalues} via contradiction. 

Assume $\lambda + \Bar{\lambda} = 0$ are pure imaginary numbers or zeros and 
\begin{equation}
    H \begin{bmatrix}
        x\\
        y
    \end{bmatrix} = \lambda \begin{bmatrix}
        x\\
        y
    \end{bmatrix},
\end{equation}
where $x$ and $y$ are not both zero. We use upper bar to denote the conjugate of the vector.

We have
\begin{align}
    A x -BR^{-1} B^\top y = \lambda x \\
    -Q x - A^\top y = \lambda y,
\end{align}
By replacing the second equation into first and multiplying both side with $\bar{y}$ yields
\begin{equation}
    \bar{y} BR^{-1} B^\top y = -(\bar{x} Q + \bar{\lambda} \bar{y} ) x - \lambda \bar{y} x = -\bar{x} Q x.
\end{equation}
Since $BR^{-1} B^\top \succeq 0$ and $Q \succeq 0$, we conclude $Q x$ and $BR^{-1} B^\top y$ = 0.
Thus 
\begin{equation}
    \begin{bmatrix}
        A -\lambda I \\
        Q
    \end{bmatrix} x =0.
\end{equation} 
If $x \neq 0$, then the above equation contradict the observability of $(Q, A)$ by Popov-Belevitch-Hautus test~\citep{hespanha2018linear}.
Similarly, if we have $y \neq 0$, we have 
\begin{equation}
    \bar{x} [BR^{-1}B^\top, A +\bar{\lambda}I] = 0,
\end{equation}
which contradict the controllability of $(A,B)$.
\end{proof}
\paragraph{Extension of Corollary~\ref{corollary: only one sol make stable} to the Discounted Setting:}
\begin{theorem}\label{thm: discounted LQR sol and closed-loop system}
For the discounted LQR problem~\eqref{eq: discounted LQR}, \textbf{at most} one solution to the HJB equation~\eqref{eq: HJB undiscounted LQR} results in a stable closed-loop system.
\end{theorem}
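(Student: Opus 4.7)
The plan is to reduce the discounted problem to the undiscounted problem with shifted dynamics via Theorem~\ref{thm: discounted don't matter}, and then exploit the pairing structure of the Hamiltonian's spectrum to bound the number of stable solutions. Set $\tilde{A} = A - \frac{1}{2\tau} I$. By Theorem~\ref{thm: discounted don't matter}, a matrix $P$ yields a solution $\mathcal{V}(\mathbf{x}) = \mathbf{x}^\top P \mathbf{x}$ of the HJB equation for the discounted LQR~\eqref{eq: discounted LQR} if and only if $P$ satisfies the ARE~\eqref{eq: continuous ARE} with $A$ replaced by $\tilde{A}$. So all solutions $P$ of the discounted HJB arise from invariant subspaces of the modified Hamiltonian $\tilde{H}$ obtained from \eqref{eq: hamiltonian} with $A \to \tilde{A}$, as guaranteed by Theorem~\ref{thm: solution to ARE}.

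Next, I would carefully distinguish the closed-loop dynamics. The actual (discounted) LQR problem uses the unmodified dynamics $\dot{\mathbf{x}}(t) = A\mathbf{x}(t) + B\mathbf{u}(t)$, and the optimal policy derived from $P$ is still $\mathbf{u}^*(t) = -R^{-1}B^\top P \mathbf{x}(t)$ as in~\eqref{eq: opt u for continuous-LQR}. Thus the closed-loop matrix is $A_{cl} = A - BR^{-1}B^\top P$, whereas the closed-loop matrix for the modified undiscounted problem is $\tilde{A}_{cl} = \tilde{A} - BR^{-1}B^\top P = A_{cl} - \frac{1}{2\tau} I$. Therefore $\mathrm{eig}(A_{cl}) = \mathrm{eig}(\tilde{A}_{cl}) + \frac{1}{2\tau}$.

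Applying Theorem~\ref{thm: close-loop eigenvalues} to the modified system, if $[P_1^\top, P_2^\top]^\top$ spans an invariant subspace of $\tilde{H}$ with eigenvalues $\Lambda_1 = \mathrm{diag}(\lambda_1, \dots, \lambda_n)$ and $P = P_2 P_1^{-1}$, then $\tilde{A}_{cl}$ has eigenvalues $\lambda_1,\dots,\lambda_n$, so $A_{cl}$ has eigenvalues $\lambda_i + \frac{1}{2\tau}$. The closed-loop system of the discounted LQR is stable precisely when $\mathrm{Re}(\lambda_i) < -\frac{1}{2\tau}$ for every $i$. Now invoke Theorem~\ref{theorem: number of eigenvalues} applied to $\tilde{H}$: the eigenvalues of $\tilde{H}$ come in pairs $(\lambda, -\lambda)$. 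Within each pair, at most one element satisfies $\mathrm{Re}(\cdot) < -\frac{1}{2\tau}$, since $\mathrm{Re}(\lambda) < -\frac{1}{2\tau} < 0$ forces $\mathrm{Re}(-\lambda) > \frac{1}{2\tau} > -\frac{1}{2\tau}$. Consequently, among the $\binom{2n}{n}$ invariant subspaces of $\tilde{H}$, at most one yields a $\Lambda_1$ whose every eigenvalue lies in the half-plane $\mathrm{Re}(\lambda) < -\frac{1}{2\tau}$, and hence at most one $P$ produces a stable closed-loop system.

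The only subtlety, which I would address explicitly though it is essentially routine, is that Theorem~\ref{theorem: number of eigenvalues} requires controllability of $(\tilde{A}, B)$ and observability of $(Q, \tilde{A})$ rather than the originally assumed $(A,B)$ and $(Q,A)$. Shifting a matrix by a scalar multiple of the identity, $A \mapsto A - \frac{1}{2\tau} I$, preserves both properties, since the controllability matrix $[B, \tilde{A}B, \tilde{A}^2 B, \ldots]$ and the observability matrix $[Q^\top, \tilde{A}^\top Q^\top, \ldots]^\top$ have the same column (respectively row) span as their unshifted counterparts. With this verified, the pairing argument carries through unchanged, completing the proof. No existence claim is made, consistent with the ``at most one'' formulation; the hardest conceptual point is simply tracking the $\frac{1}{2\tau}$ shift between the spectra of $\tilde{A}_{cl}$ and $A_{cl}$.
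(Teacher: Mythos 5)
Your proof is correct and follows essentially the same route as the paper's: reduce to the undiscounted problem with shifted dynamics $\tilde{A}=A-\frac{1}{2\tau}I$ via Theorem~\ref{thm: discounted don't matter}, observe $A_{cl}=\tilde{A}_{cl}+\frac{1}{2\tau}I$, and conclude that at most one choice of invariant subspace places the closed-loop spectrum in the left half-plane. The only differences are that you unfold Corollary~\ref{corollary: only one sol make stable} into the explicit eigenvalue-pairing count from Theorems~\ref{theorem: number of eigenvalues} and~\ref{thm: close-loop eigenvalues}, and you verify that the shift preserves controllability and observability of $(\tilde{A},B)$ and $(Q,\tilde{A})$ --- a hypothesis the paper's proof uses implicitly without checking.
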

\begin{proof}
We prove Theorem~\ref{thm: discounted LQR sol and closed-loop system} by relating the eigenvalues of the closed-loop system matrix $A_{cl} = A - BR^{-1}BP$ to its undiscounted counterpart~\eqref{eq: modified LQR}.

Suppose $P$ is a solution to the discounted LQR problem~\eqref{eq: discounted LQR}. By Theorem~\ref{thm: discounted don't matter}, it is also a solution to its undiscounted counterpart~\eqref{eq: modified LQR}.
Then, we have a following relationship between the closed-loop system matrix $A_{cl}$ and its undiscounted counterpart $\hat{A}_{cl}$:
\begin{equation}
    A_{cl} = A - BR^{-1}B^\top P =  A -\frac{1}{2\tau}I - BR^{-1}B^\top P + \frac{1}{2\tau}I  = \hat{A}_{cl} + \frac{1}{2\tau}I.
\end{equation}
Therefore, $A_{cl}  \succ \hat{A}_{cl}$.

By Corollary~\ref{corollary: only one sol make stable} of undiscounted version, there exists at most one $P$ that makes $\hat{A}_{cl} \prec 0$. Therefore, at most one $P$ makes $A_{cl} \prec 0$.
\end{proof}
Note, the stable solution only exists when $\tau$ is sufficient large. In other words, if the future cost is discounted too quickly, the policy becomes myopic and fails to provide any guarantees for long-term behavior.

\paragraph{Proof of Theorem~\ref{thm: solutions to discrete-ARE}}
\begin{proof}
Since $\begin{bmatrix}P_1\\P_2\end{bmatrix}$ are invariant subspace of $H$, following the proof of Theorem~\ref{thm: solution to ARE}, we have 
\begin{equation}
    \begin{bmatrix}
        A + BR^{-1}B^\top A^{-\top}Q & - BR^{-1}B^\top A^{-\top}\\
        -A^{{-\top}}Q &A^{-\top}
    \end{bmatrix} \begin{bmatrix}
        I \\ P_2P_1^{-1}
    \end{bmatrix} = \begin{bmatrix}
        \hat{T} \\
        P_2P_1^{-1}\hat{T}
    \end{bmatrix},
\end{equation}
which gives two set of matrix equations:
\begin{align}
    &\hat{T} = A + BR^{-1}B^\top A^{-\top}Q - BR^{-1}B^\top A^{-T} P_2 P_1^{-1},\\
    &P_2 P_1^{-1} \hat{T} = -A^{-\top} Q + A^{-\top} P_2P_1^{-1}.
\end{align}
Substitute the first set of equations into the second gives
\begin{equation}
    P_2 P_1^{-1} (A + BR^{-1}B^\top A^{-\top}Q - BR^{-1}PB^\top A^{-\top} P_2 P_1^{-1}) = -A^{-\top} Q + A^{-\top} P_2P_1^{-1} \label{eq: discrete sub-invarient intermedium step}.
\end{equation}
Rewriting equation~\eqref{eq: discrete sub-invarient intermedium step} and substituting $P_2P_1^{-1} = P$, we have
\begin{equation}
    PA + (I + P B R^{-1} B^\top) A^{-\top} (Q-P) = 0. \label{eq: discrete sub-invarient intermedium step 2}
\end{equation}
Next, we are going to show $I + P B R^{-1} B^\top$ is invertible.

Suppose there exists non-zero row vector $y$, such that $y (I + P B R^{-1} B^\top) = 0$. Premultiplying~\eqref{eq: discrete sub-invarient intermedium step 2} by $y$ yields $y X = 0$. However, then $y = -y(XBR^{-1}B^\top) = 0$, which contradict the assumption $y \not = 0$. So $I + P B R^{-1} B^\top$ is invertible.
Thus, we have
\begin{equation}
    A^{-\top} (Q-P) = -(I+PBR^{-1}B^\top)^{-1} PA.
\end{equation}
Multiplying~\eqref{eq: discrete sub-invarient intermedium step 2} by $A^\top$, we have
\begin{align}
    A^\top P A + A^\top P BR^{-1}B^\top A^{-\top} (Q-P) + Q - P= 0\\
    Q + A^\top P A - A^\top P BR^{-1}B^\top (I+PBR^{-1}B^\top)^{-1} PA = P \label{eq: discrete almost finish}
\end{align}
Noticing
\begin{align}
(R+B^\top P B) (R^{-1}B^\top) (I+PBR^{-1}B^\top)^{-1} = (B^\top + B^\top PBR^{-1}B) (I+PBR^{-1}B^\top)^{-1} = B^{\top},  
\end{align}
we have 
\begin{equation}
    (R^{-1}B^\top) (I+PBR^{-1}B^\top)^{-1} = (R+B^\top P B)^{-1} B^\top \label{eq: discrete attention}.
\end{equation}
Substituting~\eqref{eq: discrete attention} back to~\eqref{eq: discrete almost finish} yields
\begin{equation}
    P = Q + A^\top P A - A^\top P B (R + B^\top P B)^{-1} B^\top P A,
\end{equation}
which completes the proof.
\end{proof}

\paragraph{Proof of Theorem~\ref{thm: number of eigenvalues compare to unit cycle}}
We first introduce the discrete-time counterpart of Lemma~\ref{lemma: no imaginary eigenvalues}
\begin{lemma}\label{lemma: no eigenvalue in unit cycle}
If the pair $(A,B)$ is controllable and the pair $(Q,A)$ is observable, the Hamiltonian matrix $H$~\eqref{eq: discrete-Hamitonian} has no eigenvalue such that $|\lambda| = 1$. 
\end{lemma}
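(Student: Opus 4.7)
The plan is to mirror the strategy used in the proof of Lemma~\ref{lemma: no imaginary eigenvalues}, but with the discrete-time symmetry $\bar{\lambda} = 1/\lambda$ (valid exactly when $|\lambda|=1$) playing the role that $\lambda + \bar\lambda = 0$ played on the imaginary axis. That is, I will argue by contradiction: suppose $H\begin{bmatrix}x\\y\end{bmatrix} = \lambda\begin{bmatrix}x\\y\end{bmatrix}$ with $(x,y) \neq 0$ and $|\lambda|=1$, and derive violations of either the controllability of $(A,B)$ or the observability of $(Q,A)$ via the Popov-Belevitch-Hautus (PBH) test.

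First, I would expand the block equation $Hv = \lambda v$ using the definition~\eqref{eq: discrete-Hamitonian}, giving two coupled linear equations in $x,y$. The second block equation, $-A^{-\top}Qx + A^{-\top}y = \lambda y$, rearranges to $(I - \lambda A^\top)y = Qx$. Substituting this back into the first block equation, the cross terms involving $BR^{-1}B^\top A^{-\top}$ collapse, leaving the clean identity $(A - \lambda I)x = \lambda BR^{-1}B^\top y$. These two reduced identities are the discrete analogs of the two equations used in the continuous-time proof.

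Next, I take appropriate Hermitian inner products. Left-multiplying $(A-\lambda I)x = \lambda BR^{-1}B^\top y$ by $y^*$ gives $y^*Ax - \lambda y^*x = \lambda y^*BR^{-1}B^\top y$. Conjugate-transposing $(I - \lambda A^\top)y = Qx$ and right-multiplying by $x$ gives $y^*x - \bar\lambda y^*Ax = x^*Qx$. Solving the second for $y^*Ax$ and substituting (this is where I use $\bar\lambda = 1/\lambda$, the key discrete-time unit-circle identity) should cause the $y^*x$ terms to cancel exactly, leaving
\begin{equation}
    x^* Q x + y^* B R^{-1} B^\top y = 0 .
\end{equation}
Since $Q \succeq 0$ and $R \succ 0$ makes $BR^{-1}B^\top \succeq 0$, both summands vanish individually, yielding $Qx = 0$ and $B^\top y = 0$.

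Finally, I plug these back into the two reduced identities. $B^\top y = 0$ implies $BR^{-1}B^\top y = 0$, so $(A - \lambda I)x = 0$; combined with $Qx = 0$, the PBH observability test for $(Q,A)$ forces $x = 0$. Similarly, $Qx = 0$ gives $(I - \lambda A^\top)y = 0$, i.e.\ $y^*(A - \lambda I) = 0$ (using $\bar\lambda = 1/\lambda$); combined with $y^*B = 0$, the PBH controllability test for $(A,B)$ forces $y = 0$. Hence $(x,y) = 0$, contradicting the assumption. The main obstacle I anticipate is purely algebraic: the $A^{-\top}$ entries in $H$ make the block equations messier than in the continuous case, and getting the $y^*x$ cancellation to land cleanly requires careful bookkeeping of where $\lambda$ versus $\bar\lambda$ appears; everything else is a direct port of the continuous-time argument.
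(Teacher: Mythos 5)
Your proposal is correct and follows essentially the same route as the paper's own proof: reduce the block eigenvalue equations to $(A-\lambda I)x = \lambda BR^{-1}B^\top y$ and $(I-\lambda A^\top)y = Qx$, combine Hermitian inner products using $\bar\lambda\lambda = 1$ to obtain $x^*Qx + y^*BR^{-1}B^\top y = 0$, and invoke the PBH tests. If anything, your bookkeeping of $\lambda$ versus $\bar\lambda$ is cleaner than the paper's version of the same argument.
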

\begin{proof}
Let
\begin{equation}
    H \begin{bmatrix}
        x\\y
    \end{bmatrix} = \lambda \begin{bmatrix}
        x\\y
    \end{bmatrix},
\end{equation} and $|\lambda| = 1$. Then,
\begin{align}
    Ax + BR^{-1} B^\top A^{-\top} Q x - BR^{-1} B = \lambda x\\
    -A^{-\top} Q x + A^{-\top} y = \lambda y.
\end{align}
Multiplying the second equation by $BR^{-1} B^\top$ and adding it to the first give 
\begin{align}
    A x - \lambda B R^{-1} B y = \lambda x\\
    -Qx + y = \lambda A^\top y.
\end{align}
Multiplying the first by $\bar{\lambda}\bar{y}$ and second by $\bar{x}$ yields
\begin{align}
    \bar{\lambda} \bar{y} Ax = \bar{\lambda} \lambda \bar{y} B R^{-1} B y + \bar{\lambda} \lambda \bar{y} x = -\bar{x} Q x- \bar{y} x.
\end{align}
Thus 
\begin{equation}
    -\bar{\lambda}\lambda \bar{y} B R^{-1} B^\top y - \bar{x} Q x = (\bar{y}y - 1) \bar{y}x = 0.
\end{equation}
Therefore,
\begin{align}
    y^\top B = 0 \\
    Q x = 0
\end{align}
The equation $Cx = 0$ implies $y^T A = \lambda^{-1} y^T$, which, together with $y^T B = 0$, identifies $\lambda^{-1}$ as an uncontrollable eigenvalue of the pair $(A, B)$. On the other hand, $y^T B = 0$ implies $Ax = \lambda x$, and combined with $Cx = 0$, this characterizes $\lambda$ as an unobservable eigenvalue of the pair $(C, A)$.
\end{proof}
We now proof the whole Theorem~\ref{thm: number of eigenvalues compare to unit cycle}:
\begin{proof}
Let $J = \begin{bmatrix}0 &I\\-I &0\end{bmatrix}$, we can easily verify $H^\top J H = J$.

So $H \mathbf{x} = \lambda \mathbf{x}$ implies
\begin{align}
    J H \mathbf{x} = \lambda J \mathbf{x} \\
    J H \mathbf{x} = \lambda H^\top (J H \mathbf{x}) \\
    \frac{1}{\lambda} \mathbf{v} = H^\top \mathbf{v},
\end{align}
where $\mathbf{v} = J H \mathbf{x}$. Thus, $\frac{1}{\lambda}$ is also a eigenvalue of $H$.

Given Lemma~\ref{lemma: no eigenvalue in unit cycle}, there must be n eigenvalues within the unit cycle and n outside.
\end{proof}
\paragraph{Proof of Theorem~\ref{thm: eigenvalues and closed-loop system}}
\begin{proof}
By invariant subspace, we have
\begin{equation}
    \begin{bmatrix}
        A + BR^{-1}B^\top Q & - BR^{-1}B^\top A^{-\top}\\
        -A^{{-\top}}Q &A^{-\top}
    \end{bmatrix} \begin{bmatrix}P_1 \\P_2\end{bmatrix} = \begin{bmatrix}P_1 \\P_2\end{bmatrix} \Lambda_1,
\end{equation}
which yields to two matrix equations:
\begin{align}
    AP_1 + BR^{-1}B^\top Q P_1 - BRB^\top A^{-\top}P_2 = P_1 \Lambda_1 \\
    -A^{-\top} Q P_1 + A^{-\top}P_2 = P_2 \Lambda_1.
\end{align}
Substituting $P_2 = PP_1$ into the second equation
\begin{equation}
   A^{-\top} (P-Q)P_1 = P P_1 \Lambda_2.
\end{equation}
Noticing
\begin{equation}
    P A_{cl} = PA-PB(R+B^\top P B)^{-1} B^\top P A = A^{-\top} (P-Q).
\end{equation}
So,
\begin{equation}
    P A_{cl} P_1 =  A^{-\top} (P-Q)P_1 = P P_1 \Lambda_2. \label{eq: same eigen value proof almost there}
\end{equation}
Substituting $P = P_2P_1^{-1}$ into~\eqref{eq: same eigen value proof almost there}, yields
\begin{equation}
    P_2 P_1^{-1} A_{cl} P_1 = P_2 \Lambda_1.
\end{equation}
Therefore, the diagonal entries of $\Lambda_1$ are eigenvalues of $P_1^{-1} A_{cl} P_1$. Since similarity transformations preserve the eigenvalues of a matrix, $A_{cl}$ also have same set of eigenvalues $\Lambda_1$
\end{proof}
\paragraph{Proof of Theorem~\ref{thm: stable sol with lyapunov conditions}}
Here, we introduce the Lyapunov Theorem and use it to finalize the proof.
\begin{lemma} \label{lemma: lypunov}
\textbf{(Lyapunov Theorem)}
Consider a dynamical system described by the differential equation:
\begin{equation}
    \dot{\mathbf{x}}(t) = f(\mathbf{x}(t)), \quad \mathbf{x}(t) \in \mathbb{R}^n,
\end{equation}
where $ f: \mathbb{R}^n \to \mathbb{R}^n $ is continuously differentiable.

Let $ \mathcal{V}(\mathbf{x}(t)) $ be a continuously differentiable scalar function, called a Lyapunov function, with the following properties: $ \mathcal{V}(\mathbf{x}(t)) > 0 $ for $ \mathbf{x} \neq \mathbf{x}_\text{eq} $ and $ \mathcal{V}(\mathbf{x}_\text{eq}) = 0 $.

Then, if $\dot{\mathcal{V}}(\mathbf{x}(t)) \leq 0$ for all $\mathbf{x}(t) \neq \mathbf{x}_\text{eq}$, then the equilibrium point at $\mathbf{x}_\text{eq}$ is stable,
where the derivative of $ \mathcal{V}(\mathbf{x}(t)) $ along the trajectories of the system is given by:
\begin{equation}
    \dot{\mathcal{V}}(\mathbf{x}(t)) = \frac{\partial \mathcal{V}(\mathbf{x}(t))^\top}{\partial \mathbf{x}} f(\mathbf{x}(t)).
\end{equation}
Furthermore, if $ \mathcal{\dot{V}}(x) < 0 $ for all $ \mathbf{x} \neq \mathbf{x}_\text{eq}$, the equilibrium point at $\mathbf{x}_\text{eq}$ is asymptotically stable and we have $\lim_{s\rightarrow\infty} \|\mathbf{x}(s) - \mathbf{x}_\text{eq}\| = 0$ for any initial condition $\mathbf{x}(t)$.
\end{lemma}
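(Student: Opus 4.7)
The plan is to prove the two claims of Lyapunov's theorem from scratch using (i) a sublevel-set invariance argument for stability, and (ii) a compactness/contradiction argument for asymptotic stability. The common engine is that the chain rule together with the hypothesis on $\dot{\mathcal{V}}$ forces $s \mapsto \mathcal{V}(\mathbf{x}(s))$ to be non-increasing (or strictly decreasing) along trajectories, while positive definiteness of $\mathcal{V}$ translates level-set containment into norm-distance containment.

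For the stability claim, I would fix an arbitrary $\varepsilon > 0$ (small enough that $\overline{B_\varepsilon(\mathbf{x}_\text{eq})}$ lies in the domain) and let $m := \min_{\|\mathbf{x}-\mathbf{x}_\text{eq}\|=\varepsilon} \mathcal{V}(\mathbf{x})$. By continuity of $\mathcal{V}$, positive-definiteness, and compactness of the sphere, this minimum is attained and strictly positive. Using continuity of $\mathcal{V}$ at $\mathbf{x}_\text{eq}$ with $\mathcal{V}(\mathbf{x}_\text{eq})=0$, I pick $\delta \in (0,\varepsilon)$ so that $\|\mathbf{x}-\mathbf{x}_\text{eq}\| < \delta$ implies $\mathcal{V}(\mathbf{x}) < m$. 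For any initial condition $\mathbf{x}(t)$ with $\|\mathbf{x}(t)-\mathbf{x}_\text{eq}\| < \delta$, the hypothesis $\dot{\mathcal{V}} \leq 0$ gives $\mathcal{V}(\mathbf{x}(s)) \leq \mathcal{V}(\mathbf{x}(t)) < m$ for all $s \geq t$. If the trajectory ever reached the sphere of radius $\varepsilon$, we would have $\mathcal{V}(\mathbf{x}(s)) \geq m$ there, a contradiction; hence $\|\mathbf{x}(s)-\mathbf{x}_\text{eq}\| < \varepsilon$ for all $s \geq t$, which is exactly stability.

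For asymptotic stability under the stronger hypothesis $\dot{\mathcal{V}}(\mathbf{x}) < 0$ for $\mathbf{x} \neq \mathbf{x}_\text{eq}$, stability already forces the trajectory to live in a compact set $K$. The function $s \mapsto \mathcal{V}(\mathbf{x}(s))$ is then monotonically decreasing and bounded below by zero, so it converges to some $L \geq 0$. I will argue by contradiction that $L = 0$: if $L > 0$, the trajectory is trapped in the compact set $K_L := \{\mathbf{x} \in K : \mathcal{V}(\mathbf{x}) \geq L\}$, which by positive-definiteness and continuity of $\mathcal{V}$ is bounded away from $\mathbf{x}_\text{eq}$. The continuous function $\dot{\mathcal{V}}$ then attains a strictly negative maximum $-c$ on $K_L$, so integrating gives $\mathcal{V}(\mathbf{x}(t+s)) \leq \mathcal{V}(\mathbf{x}(t)) - c s$, which goes negative in finite time, contradicting $\mathcal{V} \geq 0$. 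Hence $L = 0$. To finally conclude $\|\mathbf{x}(s) - \mathbf{x}_\text{eq}\| \to 0$, I note that if some subsequence $\mathbf{x}(s_k)$ stayed at distance $\geq \eta > 0$ from $\mathbf{x}_\text{eq}$, then by compactness of $K$ and positive-definiteness, $\mathcal{V}(\mathbf{x}(s_k))$ would be bounded below by the minimum of $\mathcal{V}$ on the compact annulus $K \cap \{\|\mathbf{x}-\mathbf{x}_\text{eq}\| \geq \eta\}$, contradicting $\mathcal{V}(\mathbf{x}(s)) \to 0$.

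The main obstacle will be the contradiction step in the asymptotic part: extracting a uniform strictly negative upper bound on $\dot{\mathcal{V}}$ along the trajectory. This requires simultaneously ensuring that the trajectory lives in a \emph{compact} set and that this set is \emph{uniformly separated} from $\mathbf{x}_\text{eq}$, so that the continuous negative function $\dot{\mathcal{V}}$ cannot decay to zero along the trajectory. Both requirements are harvested from (i) the stability conclusion of part one and (ii) monotonicity of $\mathcal{V}$ along trajectories combined with positive definiteness; assembling these carefully is the crux of the argument. A minor but important detail is the final implication $\mathcal{V}(\mathbf{x}(s)) \to 0 \Rightarrow \mathbf{x}(s) \to \mathbf{x}_\text{eq}$, which requires a local positive-definiteness argument on a compact annulus rather than being immediate from continuity of $\mathcal{V}$ alone.
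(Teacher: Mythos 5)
The paper does not actually prove this lemma: its ``proof'' is a one-line citation to control textbooks and Lyapunov's thesis. Your proposal therefore supplies an argument the paper omits, and the two parts you give are the standard textbook arguments. The stability part (minimum $m$ of $\mathcal{V}$ on the $\varepsilon$-sphere, choice of $\delta$ so that $\mathcal{V}<m$ on the $\delta$-ball, monotonicity of $s\mapsto\mathcal{V}(\mathbf{x}(s))$ preventing escape) is correct and complete. The local asymptotic-stability part (monotone limit $L$, contradiction via a uniform negative bound $-c$ on $\dot{\mathcal{V}}$ over the compact set $\{L\le\mathcal{V}\}\cap K$, and the compact-annulus step converting $\mathcal{V}(\mathbf{x}(s))\to 0$ into $\mathbf{x}(s)\to\mathbf{x}_\text{eq}$) is also correct, and you are right that the last conversion is not immediate from continuity alone.

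There is, however, one genuine gap, and it sits exactly where you flagged the crux: the claim ``$\lim_{s\to\infty}\|\mathbf{x}(s)-\mathbf{x}_\text{eq}\|=0$ for \emph{any} initial condition'' is \emph{global} asymptotic stability, but your step ``stability already forces the trajectory to live in a compact set $K$'' only holds for initial conditions inside the $\delta$-ball produced by the stability argument. For an arbitrary initial condition, all you know is that the trajectory remains in the sublevel set $\{\mathbf{x}:\mathcal{V}(\mathbf{x})\le\mathcal{V}(\mathbf{x}(t))\}$, and without radial unboundedness of $\mathcal{V}$ (i.e., $\mathcal{V}(\mathbf{x})\to\infty$ as $\|\mathbf{x}\|\to\infty$) this set need not be bounded; the trajectory can drift to infinity while $\mathcal{V}$ decreases, and your compactness and uniform-negativity arguments both collapse. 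This is the content of the Barbashin--Krasovskii refinement of Lyapunov's theorem, and the extra hypothesis is genuinely needed. To be fair, the gap is inherited from the lemma statement itself, which asserts the global conclusion without assuming radial unboundedness; your proof is complete for the local version (initial conditions in some neighborhood of $\mathbf{x}_\text{eq}$), and becomes complete for the global version once you add radial unboundedness and replace $K$ by the compact sublevel set $\{\mathcal{V}\le\mathcal{V}(\mathbf{x}(t))\}$.
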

\begin{proof}
See any control textbook or the English translation of Lyapunov's original thesis~\citep{lyapunov1992general}.
\end{proof}
We now prove Theorem~\ref{thm: stable sol with lyapunov conditions} by showing the solution to Bellman equation with positive definite model is a Lyapunov function.
\begin{proof}
Since $\mathcal{V}(\mathbf{x}(t))$ is the solution to HJB equation, with the policy $\mathbf{u}^*(t) = \pi^*(\mathbf{x}(t))$, we have
\begin{equation}
    \dot{\mathcal{V}}(\mathbf{x}(t)) = \frac{\partial \mathcal{V}(\mathbf{x})^\top}{\partial \mathbf{x}} f(\mathbf{x}(t), \pi^*(\mathbf{x}(t))) = - l(\mathbf{x}(t), \pi^*(\mathbf{\mathbf{x}}(t))) \leq 0. 
\end{equation}
Additionally, we have $\mathcal{V}(\mathbf{x}) > 0$ for all $\mathbf{x} \neq \mathbf{x}_\text{eq}$ and $\mathcal{V}(\mathbf{x}_\text{eq})=0$.
Therefore $\mathcal{V}(\mathbf{x}(t))$ is also a Lyapunov function for closed-loop system $\dot{\mathbf{x}} = f(\mathbf{x}, \pi^*(\mathbf{x}))$. By Lemma~\eqref{lemma: lypunov}, the closed-system is stable. Furthermore, if we have $l(\mathbf{x}(t), \pi^*(\mathbf{\mathbf{x}}(t))) > 0$ for all $\mathbf{x} \neq \mathbf{x}_\text{eq}$, the closed-loop is asymptotically stable.
\end{proof}
\begin{remark}
We assume no discount on the running cost in the proof of Theorem~\ref{thm: stable sol with lyapunov conditions}. 
This assumption allows the policy to equally weigh all future consequences of the current control, which is crucial for proving the long-term behavior of the system such as stability.
Additionally, the value function $\mathcal{V}(\mathbf{x}) < \infty$ is well-defined, given the presence of equilibrium points.

However, Theorem~\ref{thm: stable sol with lyapunov conditions} may not hold if the discount factor $e^{-\frac{s}{\tau}} \ll 1$ is small, due to myopic policy behavior. 
In such cases, local stability may still be proven under additional assumptions on the running cost and system dynamics.
\end{remark}
\section{Bellman Equation in Tabular Setting} \label{sec: tabular}
Here, we include the proof showing that the value function is the unique solution to the Bellman equation in the tabular setting for comparison. The proof is mainly adapted from~\citet{blackwell1965discounted} and the lecture note from~\citet{gangwani_lecture_2019}.

We assume both $\mathcal{X}$ and $\mathcal{U}$ is finite sets with size $|\mathcal{X}|$ and $|\mathcal{U}|$, respectively. 

For simplicity, we reframe the problem from minimizing the cost-to-go to maximizing the reward-to-go, where the reward is defined as $r(\mathbf{x}, \mathbf{u}) = -l(\mathbf{x}, \mathbf{u})$.

We define Bellman operator $\mathcal{T}: \mathbb{R}^{|\mathcal{X}|} \rightarrow  \mathbb{R}^{|\mathcal{X}|} $ for any $\mathcal{W} \in \mathbb{R}^{|\mathcal{X}|}$ as 
\begin{equation}
    (\mathcal{T} \mathcal{W})(\mathbf{x}) = \underset{\mathbf{u}}{\max} \  [r(\mathbf{x}, \mathbf{u}) + \gamma \mathcal{W}(f(\mathbf{x, \mathbf{u}}))].
\end{equation}
\begin{theorem}
Bellman operator $\mathcal{T}$ is a contraction mapping under sup-norm $\|\cdot\|_\infty$, i.e.,
\begin{equation} \label{thm: contraction mapping}
    \|\mathcal{T} \mathcal{W} - \mathcal{T}\mathcal{Z}\|_\infty \leq \gamma \|\mathcal{W} - \mathcal{Z}\|_\infty, \forall \mathcal{W}, \mathcal{Z} \in \mathbb{R}^{|\mathcal{X}|}.
\end{equation}
\end{theorem}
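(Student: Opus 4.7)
The plan is to verify the contraction inequality pointwise in the state $\mathbf{x}$ and then take a supremum. Fix an arbitrary $\mathbf{x} \in \mathcal{X}$; I would analyze $(\mathcal{T}\mathcal{W})(\mathbf{x}) - (\mathcal{T}\mathcal{Z})(\mathbf{x})$ by comparing the two maxima over $\mathbf{u}$ that define the Bellman operator. Without loss of generality, suppose $(\mathcal{T}\mathcal{W})(\mathbf{x}) \geq (\mathcal{T}\mathcal{Z})(\mathbf{x})$, and let $\mathbf{u}_\mathcal{W}^\star \in \arg\max_\mathbf{u}[r(\mathbf{x},\mathbf{u}) + \gamma \mathcal{W}(f(\mathbf{x},\mathbf{u}))]$ be a maximizer for the first term; such a maximizer exists since $\mathcal{U}$ is finite. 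The inequality would then be proved by the symmetric argument in the reverse case.

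The key elementary lemma I would invoke (or inline, since the proof is a one-liner) is that for any two real-valued functions on the same finite set, $|\max_\mathbf{u} F(\mathbf{u}) - \max_\mathbf{u} G(\mathbf{u})| \leq \max_\mathbf{u} |F(\mathbf{u}) - G(\mathbf{u})|$. Applying this with $F(\mathbf{u}) = r(\mathbf{x},\mathbf{u}) + \gamma \mathcal{W}(f(\mathbf{x},\mathbf{u}))$ and $G(\mathbf{u}) = r(\mathbf{x},\mathbf{u}) + \gamma \mathcal{Z}(f(\mathbf{x},\mathbf{u}))$, the running cost $r(\mathbf{x},\mathbf{u})$ cancels inside the absolute value, leaving
\begin{equation*}
|(\mathcal{T}\mathcal{W})(\mathbf{x}) - (\mathcal{T}\mathcal{Z})(\mathbf{x})| \leq \gamma \max_\mathbf{u} |\mathcal{W}(f(\mathbf{x},\mathbf{u})) - \mathcal{Z}(f(\mathbf{x},\mathbf{u}))|.
\end{equation*}

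Since $f(\mathbf{x},\mathbf{u}) \in \mathcal{X}$ for every $\mathbf{u}$, the right-hand side is trivially bounded by $\gamma \|\mathcal{W} - \mathcal{Z}\|_\infty$. Taking the supremum of the left-hand side over $\mathbf{x} \in \mathcal{X}$ completes the proof. There is no genuine obstacle here: the only substantive step is the max-difference inequality, which follows in two lines by writing $\max F = F(\mathbf{u}_F^\star) \leq G(\mathbf{u}_F^\star) + \max_\mathbf{u}|F(\mathbf{u}) - G(\mathbf{u})| \leq \max G + \max_\mathbf{u}|F - G|$ and symmetrically. The rest is bookkeeping, and finiteness of $\mathcal{U}$ ensures the maxima are attained so no epsilon-argument is needed.
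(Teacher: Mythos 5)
Your proposal is correct and follows essentially the same route as the paper's proof: bound $|(\mathcal{T}\mathcal{W})(\mathbf{x})-(\mathcal{T}\mathcal{Z})(\mathbf{x})|$ pointwise via the inequality $|\max_\mathbf{u} F - \max_\mathbf{u} G| \leq \max_\mathbf{u}|F-G|$, cancel the reward term, bound by $\gamma\|\mathcal{W}-\mathcal{Z}\|_\infty$, and take the maximum over $\mathbf{x}$. The only difference is that you make explicit the two-line justification of the max-difference inequality, which the paper uses implicitly.
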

\begin{proof}
For any particular state $\mathbf{x} \in \mathcal{X}$, we have
\begin{align}
    |(\mathcal{T} \mathcal{W})(\mathbf{x}) - (\mathcal{T}\mathcal{Z})(\mathbf{x})| &= | \underset{\mathbf{u}}{\max} \  [r(\mathbf{x}, \mathbf{u}) + \gamma \mathcal{W}(f(\mathbf{x, \mathbf{u}}))] - \underset{\mathbf{u}}{\max} \  [r(\mathbf{x}, \mathbf{u}) + \gamma \mathcal{Z}(f(\mathbf{x, \mathbf{u}}))]| \\
    &\leq \gamma \underset{\mathbf{u}}{\max} | \mathcal{W}(f(\mathbf{x}, \mathbf{u})) - \mathcal{Z}(f(\mathbf{x}, \mathbf{u}))| \\
    &\leq \gamma \|\mathcal{W} - \mathcal{Z}\|_\infty.
\end{align}
Since, the above holds for any state $\mathbf{x}$, we can conclude
\begin{equation}
    \|\mathcal{T} \mathcal{W} - \mathcal{T}\mathcal{Z}\|_\infty =\underset{\mathbf{x}}{\max}|(\mathcal{T} \mathcal{W})(\mathbf{x}) - (\mathcal{T}\mathcal{Z})(\mathbf{x})| \leq \gamma \|\mathcal{W} - \mathcal{Z}\|_\infty.
\end{equation}
\end{proof}
We can easily verified value function $\mathcal{V}$ is a fixed point to Bellman operator.
Furthermore, value iteration defined as 
\begin{equation}
    \mathcal{W}_{k+1} = \mathcal{T}\mathcal{W}_k, \label{eq: value iteration}
\end{equation}
converge to the value function $\mathcal{V}$.
\begin{theorem} \label{thm: value iteration converge}
Value iteration~\eqref{eq: value iteration} converges to value function $\mathcal{V}$, i.e., 
\begin{equation}
    \lim_{k \rightarrow \infty} \mathcal{W}_k = \mathcal{V},
\end{equation}
where $\mathcal{W}_{k} = \mathcal{T}^{k-1} \mathcal{W}_0$
\end{theorem}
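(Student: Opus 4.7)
The plan is to apply a standard Banach contraction argument, leveraging the two ingredients that have just been established: (i) the preceding theorem, which shows that $\mathcal{T}$ is a $\gamma$-contraction under the sup-norm with $\gamma \in (0,1)$, and (ii) the fact, asserted in the text immediately before the theorem, that the value function $\mathcal{V}$ is a fixed point of $\mathcal{T}$, i.e.\ $\mathcal{T}\mathcal{V} = \mathcal{V}$. With these two facts, the convergence of value iteration follows quickly, so I do not expect any substantive obstacle; the proof is essentially a three-line calculation.

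First I would briefly verify (or state, with a pointer to the definition of $\mathcal{V}$ and the principle of optimality) that $\mathcal{T}\mathcal{V} = \mathcal{V}$, so that by induction $\mathcal{T}^{k-1}\mathcal{V} = \mathcal{V}$ for every $k \geq 1$. Next I would iterate the contraction bound from the preceding theorem: for any initial $\mathcal{W}_0 \in \mathbb{R}^{|\mathcal{X}|}$,
\[
\|\mathcal{W}_k - \mathcal{V}\|_\infty
= \|\mathcal{T}^{k-1}\mathcal{W}_0 - \mathcal{T}^{k-1}\mathcal{V}\|_\infty
\leq \gamma \,\|\mathcal{T}^{k-2}\mathcal{W}_0 - \mathcal{T}^{k-2}\mathcal{V}\|_\infty
\leq \cdots
\leq \gamma^{\,k-1} \|\mathcal{W}_0 - \mathcal{V}\|_\infty .
\]
Since $\gamma \in (0,1)$, letting $k \to \infty$ forces the right-hand side to zero, which gives $\lim_{k\to\infty}\|\mathcal{W}_k - \mathcal{V}\|_\infty = 0$, i.e.\ convergence in sup-norm. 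Because $|\mathcal{X}|$ is finite, sup-norm convergence is equivalent to entrywise convergence, yielding the claimed limit.

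The only point that warrants any attention is the verification that $\mathcal{V}$ is indeed a fixed point: I would unfold one step of the definition of $\mathcal{V}(\mathbf{x}) = \max_{\mathbf{u}[k,\infty)} \sum_{i=k}^{\infty}\gamma^{i-k} r(\mathbf{x}_i,\mathbf{u}_i)$, separate the first reward from the tail, and apply the principle of optimality to the shifted problem starting at $f(\mathbf{x},\mathbf{u})$, obtaining $\mathcal{V}(\mathbf{x}) = \max_{\mathbf{u}}\bigl[r(\mathbf{x},\mathbf{u}) + \gamma \mathcal{V}(f(\mathbf{x},\mathbf{u}))\bigr] = (\mathcal{T}\mathcal{V})(\mathbf{x})$. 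If desired, I would also note uniqueness of the fixed point as a free corollary of the contraction inequality, so that $\mathcal{V}$ is in fact the \emph{only} $\mathcal{W}$ with $\mathcal{T}\mathcal{W} = \mathcal{W}$, which then justifies calling it the value function recovered by the iteration.
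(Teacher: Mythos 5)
Your proof is correct and follows essentially the same route as the paper: both use the fixed-point property $\mathcal{T}\mathcal{V}=\mathcal{V}$ together with the $\gamma$-contraction bound to get $\|\mathcal{W}_k-\mathcal{V}\|_\infty \leq \gamma^{k-1}\|\mathcal{W}_0-\mathcal{V}\|_\infty \to 0$. (Your exponent $k-1$ is in fact the one consistent with the statement's indexing $\mathcal{W}_k=\mathcal{T}^{k-1}\mathcal{W}_0$, whereas the paper writes $\gamma^k$; this is immaterial to the conclusion.)
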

\begin{proof}
Since $\mathcal{V}$ is a fixed point of $\mathcal{T}$ and according to Theorem~\ref{thm: contraction mapping}, we have
\begin{equation}
    \|\mathcal{W}_k - \mathcal{V}\|_\infty = \| \mathcal{T} \mathcal{W}_{k-1} - \mathcal{T} \mathcal{V}\|_\infty \leq \gamma \|\mathcal{W}_{k-1} -\mathcal{V}\|_\infty \leq \cdots \leq \gamma^k \|\mathcal{W}_0 - \mathcal{V}\|_\infty.
\end{equation}
Let $k \rightarrow \infty$ completes the proof.
\end{proof}
\begin{corollary}
Value function $\mathcal{V}$ is the unique solution to Bellman equation.
\end{corollary}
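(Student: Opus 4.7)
The plan is to leverage the two results just established — that the Bellman operator $\mathcal{T}$ is a $\gamma$-contraction in sup-norm (Theorem on contraction mapping) and that value iteration converges to $\mathcal{V}$ from any starting point (Theorem~\ref{thm: value iteration converge}) — in order to pin down $\mathcal{V}$ as the unique fixed point of $\mathcal{T}$, and hence the unique solution of the Bellman equation.

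First I would note the bridging observation that, by the very definition of $\mathcal{T}$, a function $\mathcal{W} \in \mathbb{R}^{|\mathcal{X}|}$ satisfies the Bellman equation if and only if $\mathcal{T}\mathcal{W} = \mathcal{W}$, i.e., it is a fixed point of $\mathcal{T}$. This reduction is immediate from the form of the operator and requires no extra work, but it is worth stating explicitly so that the uniqueness claim reduces to uniqueness of fixed points of $\mathcal{T}$.

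Next, I would suppose that $\mathcal{V}$ and $\mathcal{V}'$ are both fixed points of $\mathcal{T}$ and apply the contraction property directly:
\[
\|\mathcal{V} - \mathcal{V}'\|_\infty \;=\; \|\mathcal{T}\mathcal{V} - \mathcal{T}\mathcal{V}'\|_\infty \;\leq\; \gamma\,\|\mathcal{V} - \mathcal{V}'\|_\infty.
\]
Since $\gamma \in [0,1)$, this forces $\|\mathcal{V} - \mathcal{V}'\|_\infty = 0$ and therefore $\mathcal{V}' = \mathcal{V}$. As an alternative route giving the same conclusion, I could instead run value iteration starting from $\mathcal{W}_0 = \mathcal{V}'$: because $\mathcal{V}'$ is a fixed point, the iterates are constantly equal to $\mathcal{V}'$, yet by Theorem~\ref{thm: value iteration converge} they must converge to $\mathcal{V}$, again yielding $\mathcal{V}' = \mathcal{V}$.

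There is essentially no hard step here — the corollary is a direct consequence of the Banach fixed-point argument already assembled in the two preceding theorems, and the proof is a one-line application of the contraction inequality. The only subtle point to flag is that this entire argument relies on the discount factor $\gamma$ being strictly less than one; the contraction constant collapses at $\gamma = 1$, which is precisely where the non-uniqueness phenomena highlighted in the main body of the paper begin to appear.
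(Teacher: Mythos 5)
Your proof is correct and follows essentially the same route as the paper's: both arguments rest on the $\gamma$-contraction of $\mathcal{T}$, with the paper phrasing it as a contradiction after $k$ iterations ($\|\mathcal{T}^{k}\mathcal{V}' - \mathcal{V}\|_\infty \leq \gamma^k\|\mathcal{V}'-\mathcal{V}\|_\infty$ versus equality for a fixed point) while you apply the one-step inequality directly --- and your ``alternative route'' is precisely the paper's version. Your closing remark about the argument collapsing at $\gamma = 1$ is a nice observation that the paper does not make explicit in this appendix.
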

\begin{proof}
Suppose there are other $\mathcal{V}'$ that also satisfying Bellman equation. 
By Theorem~\ref{thm: value iteration converge}, we have 
\begin{equation}
    \|\mathcal{T}^{k} \mathcal{V}' - \mathcal{V}\|_\infty \leq \gamma^k \|\mathcal{V}' - \mathcal{V}\|_\infty \label{eq: norm between two sol}.
\end{equation}
However, since $\mathcal{V}'$ also satisfying Bellman equation, we have
\begin{equation}
     \|\mathcal{T}^{k} \mathcal{V}' - \mathcal{V}\|_\infty = \|\mathcal{V}' - \mathcal{V}\|_\infty,
\end{equation}
which contradicts the equation~\eqref{eq: norm between two sol}.
\end{proof}

\section{Effect of Initialization}\label{sec: more details LQR}
Despite the existence of infinitely many solutions for the LQR problem in Section~\ref{sec: canonical failure}, we observe that the neural network tends to favor certain solutions over others.
For example, when using self-normalized initialization methods such as LeCun normal or Kaiming normal~\citep{klambauer2017self}, the network tends to converge to an unstable negative definite solution. 
We hypothesize that this may be due to the negative definite solution having the smallest eigenvalues among the possible solutions, which are closer to the initial weights generated by these self-normalized methods.

\begin{figure}[ht]
    \begin{center}
    \includegraphics[width=0.8\textwidth]{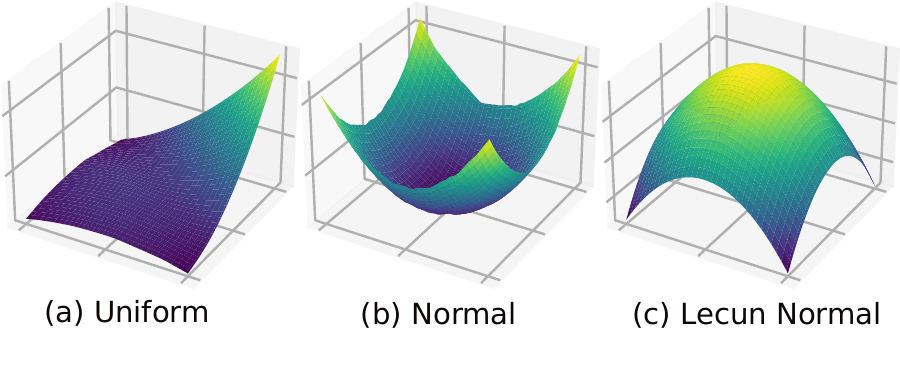}
    \end{center}
    \caption{Solutions found with different initialization method}
    \label{fig: more initialization}
\end{figure}

We further tested different initialization methods, including uniform initialization within $\pm 0.5$ and initialization using a normal distribution with a standard deviation of 1. 
Figure~\ref{fig: more initialization} shows the solutions found by the neural network with different initialization methods.
\section{Experiment Setup}\label{sec: experiment setup}

\subsection{Tasks}

\paragraph{Cartpole}
Cartpole is a classical control task for reinforcement learning. 
In this task, we consider bring the cart to the origin while maintain the posture of pole to upright position. The state is denote as $[p, \theta, v, \omega]^\top$, where $p$ denotes the position, $\theta$ is the angle measured from the upright position, and $v$ and $\omega$ denotes velocity and angular velocity, respectively. The dynamics is given by 
\begin{equation}
    \dot{\mathbf{x}}(t) = f(\mathbf{x},\mathbf{u}) = \begin{bmatrix}
    v\\ 
    \omega\\
    \frac{u + m_p\sin\theta (l\dot{\theta}^2 - g\cos\theta)}{m_c + m_p \sin^2 \theta}\\
    \frac{u\cos\theta + m_p l \dot{\theta}^2 \cos\theta \sin\theta - (m_c+m_p)g\sin\theta}{l(m_c + m_p \sin^2\theta)}
    \end{bmatrix},
\end{equation}
where $m_c=1kg$, $m_p=0.1kg$ are mass of cart and pole respectively, and $l=1m$ is the length of rod, $g=9.81m/s^2$ is the gravity constant.
\begin{wrapfigure}{r}{0.5\textwidth}
    \vskip -0pt
    \begin{center}
    \includegraphics[width=0.5\textwidth]{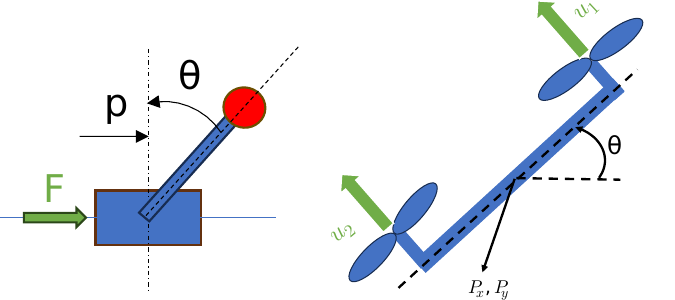}
    \end{center}
    \vskip -5pt
    \caption{System Diagram} 
    \vskip -20pt
    \label{fig:insufficient boundary conditions}
\end{wrapfigure}

Each time, the cartpole is initialized uniformly within the $\pm[4, 0.5, 2, 2]^\top$ and will reset the environment when the state goes beyond $\pm[10, 10, 1000,1000]$
The control input is the force applied to cartpole, with maximum value equal to the weight of the pole. 
The running cost is defined as
\begin{equation}
    l(\mathbf{x}(t), \mathbf{u}(t)) = \|\mathbf{x}(t)\|_2^2 + \|\mathbf{u}(t)\|_2^2
\end{equation}
\paragraph{Drone}
The 2D drone is a canonical underactuated system in the control community. 
The state is define as $[p_x, p_y, \theta, v_x, v_y, \omega]$, where $p_x$, $p_y$ represent the position in the XY plane, $\theta$ denotes the orientation, $v_x$, $v_y$ are linear velocity, and $\omega$ is the angular velocity. The dynamics is given by
\begin{equation}
    \dot{\mathbf{x}}(t) = f(\mathbf{x}, \mathbf{u}) = \begin{bmatrix}
        v_x\\
        v_y\\
        \omega\\
        -\frac{(u_1+u_2) \sin \theta}{m} \\
        \frac{(u_1+u_2)\cos\theta}{m} -g \\
        \frac{r(u_1-u_2)}{I}
    \end{bmatrix},
\end{equation}
where $m=1kg$ is the mass, $r=0.25m$ is the length of the wing, $I=0.0625 kg\cdot m^2$ is the inertia and $g=9.81 m/s^2$ is gravity constant.

The task is to control the drone to reach a desired position and maintain hovering.
Particularly, the drone is initialized uniformly within $\pm[2, 2, 2, 2, 2, 2 ]^\top$ around the target state and is reset if the state exceeds $\pm[4,4,4,5,5,5]$ from the target.
The control $[u_1, u_2]^\top$ represents the thrust generated by the propellers, with a maximum value of twice the total weight of the drone.
The running cost is defined as
\begin{equation}
    l(\mathbf{x}(t), \mathbf{u}(t)) = \|\mathbf{x}(t)\|_2^2 + \|\mathbf{u}(t)\|_2^2.
\end{equation}

\begin{figure}[ht]
    \begin{center}
    \includegraphics[width=\textwidth]{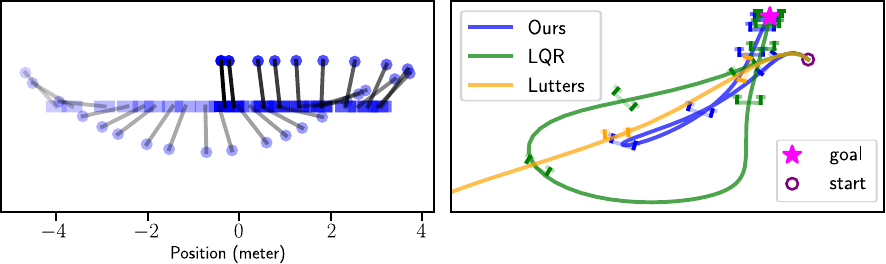}
    \end{center}
    \caption{Example trajectories generated using our method. Left: The cartpole starts with high angular and linear velocity, and our method successfully stabilizes it to the origin. Right: Drone tracking problem. We compare trajectories with~\citet{lutter2020hjb} and a linearized LQR controller. Our method finds the optimal trajectory by effectively leveraging the nonlinear dynamics.}
    \label{fig: example trajectory}
\end{figure}

\subsection{Algorithm}

We minimize the following weighted TD loss:
\begin{equation}
    \mathcal{L}_\theta = \frac{1}{N} \sum_{i=1}^N |1 + \frac{\frac{\partial \mathcal{\hat{V}}_\theta(\mathbf{x}^i)}{\partial \mathbf{x}}^\top f(\mathbf{x}^i, \mathbf{u}^i) - \frac{1}{\tau} \mathcal{\hat{V}_\theta} (\mathbf{x}^i)}{l(\mathbf{x}^i, \mathbf{u}^i)}| \label{eq: weighted TD error}
\end{equation}
where $N$ is the number of total states collected via Monte Carlo sampling.
Compared to the original TD error~\eqref{eq: continuous-time TD}, the loss~\eqref{eq: weighted TD error} weights the error based on the running cost, providing a well-scaled measure that keeps the total loss within a reasonable range.

\begin{algorithm}[h]
\caption{HJB Equation Learning}
\label{alg:vhjb}
\begin{algorithmic}[1]
\STATE Initialize the parameters $\theta$ for $\mathcal{V}_\theta$, empty dataset $\mathcal{D}=\{\}$
\FOR{$i = 1:n_\text{epoch}$}
\STATE Rollout $n_\text{rollout}$ trajectories and append states visited to $\mathcal{D}$
\FOR{batch of $\mathbf{x}^i$ in $\mathcal{D}$}
\STATE Compute the optimal control $\mathbf{u}^i$ based on~\eqref{eq: clipped optimal control}
\STATE Compute the weighed TD error $\mathcal{L}_\theta$ based on~\eqref{eq: weighted TD error}
\STATE Update the value function: $\theta \leftarrow \theta - \alpha \frac{d \mathcal{L}_\theta}{d \theta}$
\ENDFOR
\ENDFOR
\end{algorithmic}
\end{algorithm}

In both tasks, we control an affine system with box-constrained control limits.  Therefore, we compute the control $\mathbf{u}^i$ analytically:
\begin{equation}
    \mathbf{u}^i = \text{clip}(-\frac{1}{2} R^{-1} f_2^\top(\mathbf{x}^i) \frac{\partial \mathcal{V}(\mathbf{x}^i)}{\partial \mathbf{x}}, \mathbf{u}_\text{min}, \mathbf{u}_\text{max}), \label{eq: clipped optimal control}
\end{equation}
where $f(\mathbf{x}^i)$ is the control Jacobian matrix from~\eqref{eq: nonlinear control problem}.

The Algorithm~\ref{alg:vhjb} summarize the main algorithm.

\subsection{Hyperparameters}

We use the same hyperparameters for both tasks and summarized them in Table~\ref{tab:hyperparameters}.
\begin{table}[htbp]
    \caption{Hyperparameters}
    \begin{center}
        \begin{tabular}{ll}
            \multicolumn{1}{l}{\bf Parameter names}  &\multicolumn{1}{l}{\bf Value}
            \\ \hline
            Network size  &[128, 128, 64] \\
            \hline
            Learning rate & $10^{-3}$ \\
            \hline
            Batch size & 256 \\
            \hline
            Number of rollout & 20 \\
            \hline
            Maximum trajectory length &200\\
            \hline
            $\epsilon$ & $10^{-3}$\\
            \hline
            Activation function &ELU\\
            \hline
            Kernel initialization &Lecun normal\\
            \hline
            Optimizer &Adam\\
            \hline
        \end{tabular}
    \end{center}
    \label{tab:hyperparameters}
\end{table}

\end{document}